\DeclareMathOperator*{\argmin}{argmin}
\newcommand{\TheTitle}{Plug-and-Play Unplugged:  \\Optimization Free Reconstruction using \\Consensus Equilibrium
} 
\newcommand{\TheAuthors}{G.T. Buzzard, S. Chan, S. Sreehari, C.A. Bouman}
\newcommand{\ShortTitle}{Consensus Equilibrium}
\headers{\ShortTitle}{\TheAuthors}
\title{{\TheTitle}\thanks{Submitted to the editors March 23, 2017.  Revised December 8, 2017, April 15, 2018, June 12, 2018.
\funding{G.T.B. was partially supported by NSF grants NSF DMS-1318894 and NSF CCF-1763896.  S.H.C. was partially supported by NSF grants NSF CCF-1718007 and NSF CCF-1763896. C.A.B was partially supported by NSF CCF-1763896.  S.S and C.A.B. were partially supported by an AFOSR/MURI grant  \#FA9550-12-1-0458, by UES Inc. under the Broad Spectrum Engineered Materials contract, and by the Electronic Imaging component of the ICMD program 
of the Materials and Manufacturing Directorate of the Air Force Research Laboratory, Andrew Rosenberger, program manager.}}}
\author{
  Gregery T. Buzzard\thanks{Department of Mathematics, Purdue University, West Lafayette, IN, USA
    (\email{buzzard@purdue.edu})}
  \and
  Stanley H. Chan\thanks{School of Electrical and Computer Engineering and Department of Statistics, Purdue University, West Lafayette, IN, USA
  (\email{stanchan@purdue.edu})}
  \and
  Suhas Sreehari\thanks{School of Electrical and Computer Engineering, Purdue University, West Lafayette, IN, USA (\email{ssreehar@purdue.edu})}
  \and
  Charles A. Bouman\thanks{School of Electrical and Computer Engineering and Weldon School of Biomedical Engineering, Purdue University, West Lafayette, IN, USA
   (\email{bouman@purdue.edu})}
}
\newcommand{\RR}{{\mathbb R}}
\renewcommand{\Re}{{\mathbb R}}
\newcommand{\w}{{\mathbf w}}
\newcommand{\eps}{\epsilon}
\renewcommand{\u}{{\mathbf u}}
\renewcommand{\v}{{\mathbf v}}
\newcommand{\F}{{\mathbf F}}
\newcommand{\G}{{\mathbf G}}
\newcommand{\T}{{\mathbf T}}
\renewcommand{\H}{{\mathbf H}}
\newcommand{\I}{{\mathbf I}}
\newcommand{\0}{{\bf 0}}
\newcommand{\bu}{\bar{\mathbf u}}
\newcommand{\bv}{\bar{\mathbf v}}
\newcommand{\xhat}{\widehat{x}}
\begin{document}



\maketitle

\begin{abstract}
Regularized inversion methods for image reconstruction are used widely
due to their tractability and their ability to combine complex physical sensor models with useful regularity criteria.  
Such methods motivated the recently developed Plug-and-Play prior method, 
which provides a framework to use advanced denoising algorithms as regularizers in inversion.
However, the need to formulate regularized inversion as the solution to an optimization problem limits the expressiveness of possible regularity conditions and physical sensor models. 

In this paper, we introduce the idea of Consensus Equilibrium (CE), which generalizes regularized inversion to include a much wider variety of both forward (or data fidelity) components and prior (or regularity) components without the need for either to be expressed using a cost function.
Consensus equilibrium is based on the solution of a set of equilibrium equations that balance data fit and regularity.
In this framework, the problem of MAP estimation in regularized inversion is replaced by the problem of solving these equilibrium equations, which can be approached in multiple ways.  

The key contribution of CE is to provide a novel framework for fusing multiple heterogeneous models of physical sensors or models learned from data.
We describe the derivation of the CE equations and prove that the solution of the CE equations generalizes the standard MAP estimate under appropriate circumstances.  

We also discuss  algorithms for solving the CE equations, including a version of the Douglas-Rachford (DR)/ADMM algorithm with a novel form of preconditioning and Newton's method, both standard form and a Jacobian-free form using Krylov subspaces. We give several examples to illustrate the idea of consensus equilibrium and the convergence properties of these algorithms and demonstrate this method on some toy problems and on a denoising example in which we use an array of convolutional neural network denoisers, none of which is tuned to match the noise level in a noisy image but which in consensus can achieve a better result than any of them individually.  
\end{abstract}

\begin{keywords}
Plug and play, regularized inversion, ADMM, tomography, denoising, MAP estimate, multi-agent consensus equilibrium, consensus optimization.
\end{keywords}

\begin{AMS}
  94A08, 68U10 
\end{AMS}

\section{Introduction}

Over the past 30 years, statistical inversion has evolved from an interesting theoretical idea to a proven practical approach. 
Most statistical inversion methods are based on the maximum a posteriori (MAP) estimate, 
or more generally regularized inversion, using a Bayesian framework, since this approach balances computational complexity with achievable image quality. 
In its simplest form, regularized inversion is based on the solution of the optimization problem 
\begin{equation}
x^* = \argmin_{x} \left\{ f(x) + h(x) \right\},
\label{eq:RegularizedInverse}
\end{equation}
where $f$ is the data fidelity function and $h$ is the regularizing function.
In the special case of MAP estimation, $f$ represents the forward model and $h$ represents the prior model, given by 
$$
f(x) = - \log p_\text{forward}(y|x), \ \ \ \ \ h(x) = - \log p_\text{prior}(x),
$$ 
where $y$ is the data and $x$ is the unknown to be recovered.  
The solution of equation~(\ref{eq:RegularizedInverse}) balances the goals of fitting the data while also regularizing this fit according to the prior. 

In more general settings, for example with multiple data terms from multi-modal data collection, a cost function can be decomposed as a sum of auxiliary (usually convex) functions:
$$ \text{minimize } f(x) = \sum_{i=1}^N f_i(x), $$
with variable $x \in \RR^n$ and $f_i : \RR^n \to \RR \cup \{+\infty\}$.  In consensus optimization, the minimization of the original cost function is reformulated as the minimization of the sum of the auxiliary functions, each a function of a separate variable, with the constraint that the separate variables must share a common value:
\begin{equation} \label{eq:split-no-w}
 \text{minimize } \sum_{i=1}^N f_i(x_i) \text{ subject to } x_i = x, \ i=1, \ldots, N, 
 \end{equation}
with variables $x \in \RR^n$, $x_i \in \RR^n$, $i = 1, \ldots, N$. 
This reformulation allows for the application of the Alternating Direction Method of Multipliers (ADMM) or other efficient minimization methods and applies to the original problem in \eqref{eq:RegularizedInverse} as well as many other problems.  An account of this approach with many variations and examples can be found in \cite{boyd2011distributed}.

While regularized inversion and optimization problems more generally benefit from extensive theoretical results and powerful algorithms, they are also expressively limited.  For example, many of the best denoising algorithms cannot 
be put into the form of a simple optimization \cite{buades2005review, DabovBM3D07}.  Likewise, the behavior of denoising neural networks cannot generally be captured via optimization.  
These successful approaches to inverse problems lie outside the realm of optimization problems and give rise to the motivating question for this paper:

\bigskip \noindent
{\bf Question:}  How can we generalize the consensus optimization framework in \eqref{eq:split-no-w} to encompass models and operators that are not associated with an optimization problem, and how can we find solutions efficiently?  

\bigskip
There is a vast and quickly-growing literature on methods and results for convex and consensus optimization.  Seminal work in this area includes the work of Lions and Mercier \cite{Lions1979}, as well as the PhD thesis of Eckstein \cite{ecksteinThesis} and the work of Eckstein and Bertsekas \cite{eckstein1992douglas}.   We do not provide a complete survey of this literature since our focus is on a framework beyond optimization, but some starting points for this area are \cite{Bauschke2011,boyd2011distributed,BoydVandenberghe2009}.  

As for approaches to fuse a data fidelity model with a denoiser that is not based on an optimization problem, the first attempt to our knowledge is \cite{venkatakrishnan2013school}.  The goal of this approach, called the Plug-and-Play prior method, is to replace the prior model in the Bayesian formulation with a denoising operator.  This is done by taking the ADMM algorithm, which is often used to find solutions for consensus optimization problems, and replacing one of the optimization steps (proximal maps) of this algorithm with the output of a denoiser.  Recently, a number of authors have built on the Plug-and-Play method as a way 
to construct implicit prior models through the use 
of denoising operators \cite{rond2015poisson, sreehari2016TCI, Sreehari2016, venkatakrishnan2013model}.
In \cite{sreehari2016TCI}, conditions are given on the denoising operator that will ensure it is a proximal mapping,
so that the MAP estimate exists and the ADMM algorithm converges.
However, these conditions impose relatively strong symmetry conditions on the denoising operator that may not occur in practice. For applications where fixed point convergence is sufficient, it is possible to relax the conditions on the denoising operator by iteratively controlling the step size in the proximal map for the forward model and the noise level for the denoiser \cite{Chan_Wang_Elgendy_2017}.

The paper \cite{Romano2017} provides a different approach to building on the idea of Plug-and-Play.
That paper uses the classical forward model plus prior model in the framework of optimization, but constructs a prior term directly from the denoising engine; this is called Regularization by Denoising (RED).  
For a denoiser $x \mapsto H(x)$, the prior term is given by $\lambda x^T (x - H(x))$.
This approach is formulated as an optimization problem associated with any denoiser, 
but in the case that the denoiser itself is obtained from a prior, the RED prior is different from the denoiser prior.
Other approaches that build on Plug-and-Play include \cite{Ono-2017}, which uses primal-dual splitting in place of an ADMM approach, and \cite{Kamilov-2017}, which uses FISTA in a Plug-and-Play framework to address a nonlinear inverse scattering problem.  

In this paper, we introduce Consensus Equilibrium (CE) as an optimization-free generalization of regularized inversion and consensus optimization that can be used to fuse multiple sources of information implemented as maps such as denoisers, deblurring maps, data fidelity maps, proximal maps, etc. 
We show that CE generalizes consensus optimization problems in the sense that if the defining maps are all proximal maps associated with convex functions, then any CE solution is also a solution to the corresponding consensus optimization problem.
However, the consensus equilibrium can still exist in the more general case when the defining maps are not proximal maps; in this case, there is no underlying optimization. 
In the case of a single data fidelity term and a single denoiser, the solution has the interpretation of achieving the best denoised inverse of the data. 
That is, the proximal map associated with the forward model pulls the current point towards a more accurate fit to data, while the denoising operator pulls the current point towards a ``less noisy'' image. 
We illustrate this in a toy example in two dimensions:  the consensus equilibrium is given by a balance between two competing forces. 

In addition to introducing the CE equations, we discuss ways to solve them and give several examples.  We describe a version of the Douglas-Rachford (DR)/ADMM algorithm with a novel form of anisotropic preconditioning.  We also apply Newton's method, both in standard form and in a Jacobian-free form using Krylov subspaces.  

In the experimental results section, we give several examples to illustrate the idea of consensus equilibrium and the convergence properties of these algorithms.
We first demonstrate the proposed algorithms on some toy problems in order to illustrate properties of the method.
We next use the consensus equilibrium framework to solve an image denosing problem using an array of convolutional neural network (CNN) denoisers, none of which is tuned to match the noise level in a noisy image.
Our results demonstrate that that the consensus equilibrium result is better than any of the individually applied CNN denoisers.

\section{Consensus Equilibrium: Optimization and Beyond}

In this section we formulate the consensus equilibrium equations, show that they encompass a form of consensus optimization in the case of proximal maps, and describe the ways that CE goes beyond the optimization framework.  

\subsection{Consensus Equilibrium for Proximal Maps}

We begin with a slight generalization of \eqref{eq:split-no-w}:  
\begin{equation} \label{eq:split}
 \text{minimize } \sum_{i=1}^N \mu_i f_i(x_i) \text{ subject to } x_i = x, \ i=1, \ldots, N, 
 \end{equation}
with variables $x \in \RR^n$, $x_i \in \RR^n$, $i = 1, \ldots, N$, and weights $\mu_i>0$, $i = 1, \ldots, N$, that sum to 1 (an arbitrary normalization, but one that supports the idea of weighted average that we use later). From the point of view of optimization, each weight $\mu_i$ could be absorbed into $f_i$.  However, in Consensus Equilibrium we move beyond this optimization framework to the case in which the $f_i$ may be defined only implicitly or the case in which there is no optimization, but only mappings that play a role similar to the proximal maps that arise in the ADMM approach to solving \eqref{eq:split}.  The formulation in \eqref{eq:split} serves as motivation and the foundation on which we build.  

To extend the optimization framework of \eqref{eq:split} to consensus equilibrium, we start with $N$ vector-valued maps, $F_i: \RR^n \to \RR^n$, $i=1, \ldots, N$.  
The {\em Consensus Equilibrium} for these maps is defined as any solution $(x^*, \u^*) \in \RR^n \times \RR^{nN}$ that solves the equations
\begin{align}
F_i( x^* + u_i^* ) &= x^*, \ i=1, \ldots, N, \label{eq:CEF}\\
\bar{\u}_\mu^* &= 0.  \label{eq:CEu}
\end{align}
Here $\u$ is a vector in $\RR^{nN}$ obtained by stacking the vectors $u_1, \ldots, u_N$, and $\bar{\u}_\mu$ is the weighted average $\sum_{i=1}^N \mu_i u_i$.  

In order to relate consensus equilibrium to consensus optimization,
first consider the special case in which each $f_i:\Re^n\to  \Re \cup \{+\infty\}$ in \eqref{eq:split} is a proper, closed, convex function and each $F_i$ is a corresponding proximal map, i.e., a map of the form
\begin{equation}  \label{eq:prox}
F_i(x) = \argmin_v \left\{\frac{\|v-x\|^2}{2 \sigma^2} + f_i(v)\right\}. 
\end{equation}
Methods such as ADMM, Douglas-Rachford, and other variants of the proximal point algorithm apply these maps in sequence or in parallel with well-chosen arguments, together with some map to promote $x_i = z$ for all $i$, in order to solve \eqref{eq:split};  see e.g., \cite{bertsekas1996, boyd2011distributed, combettes2011proximal, ecksteinThesis, eckstein1992douglas, Lions1979, boyd-primer}.   
In the setting of Bayesian regularized inversion, each $f_i$ represents a data fidelity term or regularizing function.  We allow for the possibility that $f_i$ enforces some hard constraints by taking on the value $+\infty$.

Our first theorem states that when the maps $F_i$ are all proximal maps as described above, then the solutions to the consensus equilibrium problem are exactly the solutions to the consensus optimization problem of equation~\eqref{eq:split}.  In this sense, Consensus Equilibrium encompasses the optimization framework of \eqref{eq:split}. 

\medskip
\begin{theorem} \label{thm1} 
For $i=1, \ldots, N$, let $f_i$  be a proper, lower-semicontinuous, convex function on $\Re^n$, and let $\mu_i > 0$ with $\sum_{i=1}^N \mu_i = 1$.  Define $f = \sum_{i=1}^N \mu_i f_i$, and assume $f$ is finite on some open set in $\Re^n$.  Let $F_i$ be the proximal maps as in \eqref{eq:prox}.  Then the set of solutions to the consensus equilibrium equations of~\eqref{eq:CEF} and \eqref{eq:CEu}
is exactly the set of solutions to the minimization problem \eqref{eq:split}.
\end{theorem}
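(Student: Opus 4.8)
The plan is to convert the consensus equilibrium equations into statements about subdifferentials and then apply the Moreau--Rockafellar sum rule. The key dictionary is the standard characterization of a proximal map: since each $f_i$ is proper, lower-semicontinuous and convex, the objective in \eqref{eq:prox} is strictly convex and coercive, so $F_i(x)$ is well defined, and $p = F_i(x)$ if and only if $0 \in (p-x)/\sigma^2 + \partial f_i(p)$, i.e.\ $(x-p)/\sigma^2 \in \partial f_i(p)$. Applying this with $x = x^* + u_i^*$ and $p = x^*$, the equilibrium condition \eqref{eq:CEF} becomes exactly
\begin{equation*}
\frac{u_i^*}{\sigma^2} \in \partial f_i(x^*), \qquad i = 1, \ldots, N .
\end{equation*}
Throughout I would identify a consensus equilibrium $(x^*,\u^*)$ with the point $x^*$, and a solution of \eqref{eq:split} with its common value $x$ (the constraints force $x_i = x$, so a solution of \eqref{eq:split} is precisely a minimizer of $f = \sum_i \mu_i f_i$ replicated across the $x_i$).

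First I would do the forward implication. Suppose $(x^*,\u^*)$ solves \eqref{eq:CEF}--\eqref{eq:CEu}. By the display, $u_i^*/\sigma^2 \in \partial f_i(x^*)$ for each $i$; multiplying by $\mu_i$, summing, and using the elementary inclusion $\sum_i \mu_i \partial f_i(x^*) \subseteq \partial f(x^*)$ (immediate from the defining subgradient inequalities, no regularity needed), the constraint $\sum_i \mu_i u_i^* = 0$ of \eqref{eq:CEu} gives $0 \in \partial f(x^*)$. Hence $x^*$ minimizes $f$, so it solves \eqref{eq:split}.

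For the reverse implication I would start from a minimizer $x^*$ of $f$, so $0 \in \partial f(x^*)$, and must produce multipliers $u_i^*$. This is where the hypothesis that $f$ is finite on an open set $U$ is used: since the $f_i$ are proper convex and $\sum_i \mu_i f_i$ is finite on $U$, each $f_i$ is finite, hence continuous, on $U$, so the Moreau--Rockafellar theorem yields the \emph{exact} sum rule $\partial f(x) = \sum_{i=1}^N \mu_i \partial f_i(x)$ for all $x$. Thus $0 \in \sum_i \mu_i \partial f_i(x^*)$, i.e.\ there exist $g_i \in \partial f_i(x^*)$ with $\sum_i \mu_i g_i = 0$. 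Setting $u_i^* = \sigma^2 g_i$ makes \eqref{eq:CEu} hold by construction, and by the prox dictionary $F_i(x^* + u_i^*) = x^*$, so \eqref{eq:CEF} holds as well; hence $(x^*,\u^*)$ is a consensus equilibrium.

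The only real obstacle I anticipate is justifying the exact subdifferential sum rule in the reverse direction: the inclusion $\partial f \supseteq \sum_i \mu_i \partial f_i$ is not automatic and fails without a constraint qualification, and the ``finite on an open set'' assumption is exactly what supplies one (via continuity of each $f_i$ on $U$). The remaining ingredients --- the prox/subdifferential characterization, the easy inclusion, and the reduction of \eqref{eq:split} to minimizing $f$ --- are routine. I would also remark that the claimed set equality is consistent when $f$ has no minimizer, since the forward direction then shows there is no consensus equilibrium either.
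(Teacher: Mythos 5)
Your proof is correct, and it reaches the paper's conclusion by a slightly different packaging of the convex-analysis step. Both arguments rest on the same dictionary: $F_i(x^*+u_i^*)=x^*$ if and only if $u_i^*/\sigma^2\in\partial f_i(x^*)$ (the paper invokes the resolvent identity $F_i=(I+\sigma^2\partial f_i)^{-1}$, you derive it from first-order optimality of the strongly convex prox objective), after which \eqref{eq:CEF}--\eqref{eq:CEu} become the conditions $u_i^*/\sigma^2\in\partial f_i(x^*)$ and $\sum_i\mu_i u_i^*=0$. Where you differ is in how these are matched to optimality for \eqref{eq:split}: the paper keeps the constrained formulation and cites the KKT theorem for convex programs (Rockafellar, Thm.~28.3), with Lagrange multipliers $\lambda_i^*$ that correspond exactly to your $\sigma^2\mu_i g_i$; you instead reduce \eqref{eq:split} to the unconstrained minimization of $f=\sum_i\mu_i f_i$, split the two directions, and use the elementary inclusion $\sum_i\mu_i\partial f_i\subseteq\partial f$ for necessity and the Moreau--Rockafellar exact sum rule for sufficiency. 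Your route has two small advantages: it makes explicit exactly where the hypothesis that $f$ is finite on an open set is needed (as the constraint qualification guaranteeing $\partial f=\sum_i\mu_i\partial f_i$; the forward direction needs no qualification at all), and it handles cleanly the degenerate case in which $f$ has no minimizer, where both solution sets are empty. The paper's KKT route is more compact since one citation delivers both directions at once, but it leaves the role of the open-set hypothesis (as the regularity condition behind multiplier existence) implicit. One point worth stating if you write this up: the two solution sets live in different spaces, so the claimed equality should be read, as you do, through the identification of a CE pair $(x^*,\u^*)$ with its consensus component $x^*$ and of a solution of \eqref{eq:split} with its common value; for a fixed minimizer the vector $\u^*$ need not be unique when the subdifferentials are not singletons.
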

  
\medskip
The proof is contained in the appendix.

\subsection{Consensus Equilibrium Beyond Optimization}

Theorem~\ref{thm1} tells us that consensus equilibrium extends consensus optimization, but as noted above, the novelty of consensus equilibrium is not as a recharacterization of \eqref{eq:split} in the case of proximal maps but rather as a framework that applies even when some of the $F_i$ are not proximal mappings and there is {\em no underlying optimization problem to be solved}.  The Plug-and-Play reconstruction method of \cite{venkatakrishnan2013school}, which yields high quality solutions for important applications in tomography \cite{sreehari2016TCI} and denoising \cite{rond2015poisson}, is to our knowledge, the first method to use denoisers that do not arise from an optimization for regularized inversion.  As we show below, the CE framework also encompasses the Plug-and-Play framework in that if Plug-and-Play converges, then the result is also a CE solution.  However, Plug-and-Play grew out of ADMM, and the operators that yield convergence in ADMM are more limited than we would like.   Hence, for both consensus optimization and Plug-and-Play priors, CE encompasses the original method but also allows for a wider variety of operators and solution algorithms.   

An important point about moving beyond the optimization framework is that a given set of maps $F_i$ may lead to multiple possible CE solutions.  This may also happen in the optimization framework when the $f_i$ are not strictly convex since there may be multiple local minima.  In the optimization case, the objective function can sometimes be used to select among local minima. The analogous approach for consensus equilibrium is to choose a solution that minimizes the size of $\bu_\mu^*$, e.g. the $L_1$ or $L_2$ norm of $\bu^*$.  This corresponds in some sense to minimizing the tension among the competing forces balanced to find equilibrium.

\section{Solving the Equilibrium Equations}

In this section, we rewrite the CE equations as an unconstrained system of equations and then use this to express the solution in terms of a fixed point problem.  We also discuss particular methods of solution, including novel preconditioning methods and methods to solve for a wide range of possible $\F$.  We first introduce some additional notation.  For $\v \in \RR^{nN}$, with $\v = (v_1^T, \ldots, v_N^T)$ and each $v_j \in \RR^n$, define $\F, \G_\mu:\RR^{nN} \to \RR^{nN}$ by
\begin{equation} \label{eqn:FG}
\F(\v) = \begin{pmatrix} F_1(v_1) \\ \vdots \\ F_N(v_N) \end{pmatrix} \ \text{ and } \ 
 \G_\mu(\v) = \begin{pmatrix} \bv_\mu \\ \vdots \\ \bv_\mu \end{pmatrix} , 
 \end{equation}
where $\G_\mu$ has the important interpretation of redistributing the weighted average of the vector components given by $\bv_\mu = \sum_{i=1}^N \mu_i v_i$ across each of the output components.

Also, for $x \in \RR^n$, let $\hat x$ denote the vector obtained by stacking $N$ copies of $x$.  With this notation, the CE equations are given by
\begin{align}
\F(\hat x^* + \u^*) &= \hat x^*,  \label{eq:CEG}\\
\bu_\mu^* &= \0.  \nonumber
\end{align}

This notation allows us to reformulate the CE equations as the solution to a system of equations. 

\medskip
\begin{theorem}  \label{thm:T} 
The point $(x^*, \u^*)$ is a solution of the CE equations~\eqref{eq:CEF} and \eqref{eq:CEu}  if and only if the point $\v^* = \hat x^* + \u^*$ satisfies   $\bv_\mu^* = x^*$ and
\begin{align}  \label{eq:FG}
\F(\v^*) =\G_\mu(\v^*).
\end{align}
\end{theorem}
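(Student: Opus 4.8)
The plan is a direct verification, using the change of variables $v_i^* = x^* + u_i^*$ (equivalently $\v^* = \hat x^* + \u^*$), under which the two CE equations \eqref{eq:CEF}--\eqref{eq:CEu} translate into the two claimed conditions. Since the statement is an equivalence with the pair $(x^*,\u^*)$ held fixed and $\v^*$ merely an abbreviation, no inversion of the substitution is needed; it suffices to show the two systems are equivalent. I would first record the elementary identity $\bv_\mu^* = \sum_{i=1}^N \mu_i v_i^* = \big(\sum_{i=1}^N \mu_i\big)x^* + \bu_\mu^* = x^* + \bu_\mu^*$, where the last step uses the normalization $\sum_i \mu_i = 1$; this already shows that $\bv_\mu^* = x^*$ is equivalent to $\bu_\mu^* = \0$, i.e. to \eqref{eq:CEu}.

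For the forward direction, assume $(x^*,\u^*)$ solves \eqref{eq:CEF}--\eqref{eq:CEu}. By \eqref{eq:CEF}, $F_i(v_i^*) = F_i(x^*+u_i^*) = x^*$ for each $i$, so by the definition of $\F$ in \eqref{eqn:FG} we have $\F(\v^*) = \hat x^*$. By \eqref{eq:CEu} and the identity above, $\bv_\mu^* = x^*$, so $\G_\mu(\v^*)$ (which stacks $N$ copies of $\bv_\mu^*$) equals $\hat x^*$ as well. Hence $\F(\v^*) = \hat x^* = \G_\mu(\v^*)$, which together with $\bv_\mu^* = x^*$ gives both claimed conditions.

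For the reverse direction, assume $\v^* = \hat x^* + \u^*$ satisfies $\bv_\mu^* = x^*$ and $\F(\v^*) = \G_\mu(\v^*)$. From $\bv_\mu^* = x^*$ and the definition of $\G_\mu$ in \eqref{eqn:FG} we get $\G_\mu(\v^*) = \hat x^*$, so $\F(\v^*) = \hat x^*$, i.e. $F_i(x^*+u_i^*) = x^*$ for all $i$, which is \eqref{eq:CEF}. From the identity above, $\bv_\mu^* = x^*$ forces $\bu_\mu^* = \0$, which is \eqref{eq:CEu}. Thus $(x^*,\u^*)$ solves the CE equations, completing the equivalence.

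I do not expect a genuine obstacle: the argument is bookkeeping. The only points deserving care are to use the definitions in \eqref{eqn:FG} correctly — in particular that $\G_\mu$ broadcasts the single averaged vector into all $N$ output slots — and to invoke the normalization $\sum_i \mu_i = 1$, which is precisely what lets the affine relation $v_i = x^* + u_i$ carry the constraint $\bu_\mu^* = \0$ over to $\bv_\mu^* = x^*$ and back.
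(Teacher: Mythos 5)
Your proof is correct and follows essentially the same route as the paper's: direct bookkeeping with the substitution $\v^* = \hat x^* + \u^*$, using the normalization $\sum_i \mu_i = 1$ (equivalently, linearity of $\G_\mu$ and $\G_\mu(\hat x^*) = \hat x^*$) to pass between $\bu_\mu^* = \0$ and $\bv_\mu^* = x^*$. The only cosmetic difference is that the paper's converse reconstructs $x^* = \bv_\mu^*$ and $\u^* = \v^* - \hat x^*$ from $\v^*$, whereas you keep $(x^*,\u^*)$ fixed throughout; both readings yield the same verification.
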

\medskip

\begin{proof}
Let $(x^*, \u^*)$ be a solution to the CE equations, and let $\v^* = \hat x^* + \u^*$.  Linearity of $\G_\mu$ together with $\bu_\mu^* = \0$ give $\G_\mu(\v^*) = \hat x^*$, so in particular, $\bv_\mu^* = x^*$.  Using this in \eqref{eq:CEG} gives \eqref{eq:FG}.

Conversely, if $\v^*$ satisfies \eqref{eq:FG}, define $x^* = \bv_\mu^*$ and $\u^* = \v^* - \hat x^*$.  Then \eqref{eq:CEF} and \eqref{eq:CEu} are satisfied by definition of $x^*$ and \eqref{eq:FG}. 
\end{proof}

We use this to reformulate consensus equilibrium as a fixed point problem.  

\medskip
\begin{corollary} \label{cor:T} (Consensus equilibrium as fixed point.)
The point $(x^*, \u^*)$ is a solution of the CE equations~\eqref{eq:CEF} and \eqref{eq:CEu}  if and only if the point $\v^* = \hat x^* + \u^*$ satisfies   $\bv_\mu^* = x^*$ and
\begin{align}  \label{eq:fixed}
(2\G_\mu - \I)(2\F - \I)\v^* = \v^*. 
\end{align}
\end{corollary}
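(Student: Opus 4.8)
The plan is to deduce the corollary from Theorem~\ref{thm:T} by establishing a single algebraic equivalence involving the operator $2\G_\mu - \I$. Theorem~\ref{thm:T} already shows that $(x^*,\u^*)$ solves the CE equations~\eqref{eq:CEF}--\eqref{eq:CEu} if and only if $\v^* = \hat x^* + \u^*$ satisfies $\bv_\mu^* = x^*$ together with $\F(\v^*) = \G_\mu(\v^*)$. Since the statement of the corollary carries exactly the same side condition $\bv_\mu^* = x^*$, it suffices to prove that for every $\v^* \in \RR^{nN}$,
\[
\F(\v^*) = \G_\mu(\v^*) \quad \Longleftrightarrow \quad (2\G_\mu - \I)(2\F - \I)\v^* = \v^*.
\]

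First I would record two elementary properties of $\G_\mu$ from its definition in \eqref{eqn:FG}. It is linear, since it maps $\v$ to $N$ stacked copies of the weighted average $\bv_\mu = \sum_i \mu_i v_i$. It is idempotent, because the weighted average of $N$ copies of $\bv_\mu$ equals $\big(\sum_i \mu_i\big)\bv_\mu = \bv_\mu$, using $\sum_i \mu_i = 1$; hence $\G_\mu^2 = \G_\mu$. From these two facts, $(2\G_\mu - \I)^2 = 4\G_\mu^2 - 4\G_\mu + \I = \I$, so $2\G_\mu - \I$ is a linear involution, in particular invertible and equal to its own inverse. Applying this involution to both sides of the fixed-point equation $(2\G_\mu - \I)(2\F - \I)\v^* = \v^*$ then shows it is equivalent to $(2\F - \I)\v^* = (2\G_\mu - \I)\v^*$, i.e., to $2\F(\v^*) - \v^* = 2\G_\mu(\v^*) - \v^*$, i.e., to $\F(\v^*) = \G_\mu(\v^*)$. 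Combined with Theorem~\ref{thm:T}, this proves the corollary.

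I do not expect a real obstacle here: the only identity needing care is $(2\G_\mu - \I)^2 = \I$, which rests on the idempotency of $\G_\mu$ and hence on the normalization $\sum_i \mu_i = 1$. It is worth noting that the argument uses no structure of $\F$ whatsoever beyond its definition — in particular $\F$ need not be linear or a proximal map — which is precisely what one wants for a framework meant to go beyond optimization.
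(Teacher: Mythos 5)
Your proposal is correct and takes essentially the same route as the paper's own proof: both reduce to Theorem~\ref{thm:T} and then use linearity and idempotency of $\G_\mu$ to conclude $(2\G_\mu - \I)(2\G_\mu - \I) = \I$, applying this involution to pass between $\F(\v^*) = \G_\mu(\v^*)$ and the fixed-point equation~\eqref{eq:fixed}.
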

\medskip

When $\F$ is a proximal map for a function $f$, then $2 \F - \I$ is known as the reflected resolvent of $f$.  Discussion and results concerning this operator can be found in \cite{Bauschke2011,eckstein1992douglas,boyd-precondition} among many other places.  This fixed point formulation is closely related to the fixed point formulation for minimizing the sum of two functions using Douglas-Rachford splitting; this is seen clearly in section 4 of \cite{Giselsson2017} among other places.  The form given here is somewhat different in that the reflected resolvents are computed in parallel and then averaged, as opposed to the standard sequential form.  Beyond that, the novelty here is in the equivalence of this formulation with the CE formulation.  

\begin{proof}[Proof of Corollary~\ref{cor:T}]
By Theorem~\ref{thm:T}, $(x^*, \u^*)$ is a solution of \eqref{eq:CEF} and \eqref{eq:CEu}  if and only if $\v^* = \hat x^*+ \u^*$ satisfies $\bv^* = x^*$ and \eqref{eq:FG}.  From \eqref{eq:FG} we have $(2\F - \I) \v^* = (2\G_\mu - \I) \v^*$.  A calculation shows that $\G_\mu \G_\mu = \G_\mu$, so $(2\G_\mu - \I)(2\G_\mu - \I) = \I$ by linearity of $\G_\mu$.  Hence applying $2\G_\mu-\I$ to both sides gives \eqref{eq:fixed}.  Reversing these steps returns from \eqref{eq:fixed} to \eqref{eq:FG}.  
\end{proof}

\subsection{Anisotropic Preconditioned Mann Iteration for Nonexpansive Maps}

Define $\T = (2\G_\mu-\I)(2\F-\I)$.  When $\T$ is nonexpansive and has a fixed point, we can use Mann iteration to find a fixed point of $\T$ as required by \eqref{eq:fixed}.  For a fixed parameter $\rho \in (0,1)$, this takes the form
\begin{equation}  \label{eq:MannFG}
\w^{k+1} = (1-\rho) \w^k + \rho \T(\w^k),  
\end{equation}
with iterates guaranteed to converge to a fixed point of $\T$. In the context of minimization problems in which $\F$ and $\G$ are both proximal maps, and depending on the choice of $\rho$, iterations of this form are essentially variants of the proximal point algorithm and give rise to the (generalized) Douglas-Rachford algorithm, the Peaceman-Rachford algorithm, and the ADMM algorithm, including over-relaxed and under-relaxed variants of ADMM.  In the case of $N=2$ and $\rho = 0.5$, the form in \eqref{eq:MannFG} is equivalent up to a change of variables to the standard ADMM algorithm; other values of $\rho$ give over-relaxed and under-relaxed variants.  Early work in this direction appears in \cite{ecksteinThesis} and \cite{eckstein1992douglas}.  A concise discussion is found in \cite{boyd-precondition}, which together with \cite{Giselsson2014} provides a preconditioned version of this algorithm in the case of $N=2$.  This preconditioning is obtained by replacing the original minimization of $f(x) + g(x)$ by minimization of $f(Dq) + g(Dq)$, which gives rise to iterations involving the conjugate proximal maps $D^{-1} F_D(D q)$, where $F_D$ is the proximal map for $f$ as in \eqref{eq:prox} using the norm $\| \cdot \|_{(D D^T)^{-1}}$ in place of the usual Euclidean metric. \cite{boyd-precondition} includes some results about the rate of convergence as a function of $D$.  In some cases, a larger value of $\rho$ leads to faster convergence relative to $\rho = 0.5$.  There are also results on convergence in the case that fixed $\rho$ is replaced by a sequence of $\rho_k$ such that $ \sum_k \rho_k (1-\rho_k) = \infty$ \cite{Bauschke2011}.  Further discussion and early work on this approach is found in \cite{ecksteinThesis,eckstein1992douglas}.  With some abuse of nomenclature, we use ADMM below to refer to Mann iteration with $\rho = 0.5$.

Here we describe an alternative preconditioning approach for the Mann iteration in which we use an invertible linear map $\H$ in place of the scalar $\rho$ in \eqref{eq:MannFG}.  In this approach, $\T$ can be any nonexpansive map and $\H$ can be any symmetric matrix with $\H$ and $\I-\H$ both positive definite.   

\begin{theorem} \label{thm:FilteredMann}
 Let $\H$ be a positive definite, symmetric matrix and let $\T$ be nonexpansive on $\RR^{nN}$ with at least one fixed point.  Suppose that the largest eigenvalue of $\H$ is strictly less than $1$.  For any $\v^0$ in $\RR^{nN}$, define
\begin{equation} \label{eq:PrecondMann}
\v^{k+1} = (\I - \H) \v^k + \H \T(\v^k)
\end{equation}
for each $k \geq 0$.  Then the sequence $\{\v^k\}$ converges to a fixed point of $\T$.  
\end{theorem}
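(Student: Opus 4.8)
The plan is to reduce the iteration \eqref{eq:PrecondMann} to a standard averaged (Krasnosel'skii--Mann) iteration by a change of inner product. Since $\H$ is symmetric positive definite, it has a symmetric positive definite square root $\H^{1/2}$; substitute $\w^k = \H^{-1/2}\v^k$ into \eqref{eq:PrecondMann} to obtain $\w^{k+1} = (\I-\H)\H^{-1/2}\v^k\,\H^{-1/2}$... more carefully, one gets $\w^{k+1} = \w^k + \H^{1/2}\bigl(\T(\H^{1/2}\w^k) - \H^{1/2}\w^k\bigr)$. This is not quite symmetric enough; instead I would conjugate differently. The cleaner route is to change the \emph{metric} rather than the variables: equip $\RR^{nN}$ with the inner product $\langle \x,\y\rangle_{\H^{-1}} = \x^T\H^{-1}\y$, with induced norm $\|\cdot\|_{\H^{-1}}$. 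In this metric the update \eqref{eq:PrecondMann} reads $\v^{k+1} = \v^k + \H\bigl(\T(\v^k)-\v^k\bigr)$, and I claim the operator $\S := \I + \H(\T-\I)$ is averaged with respect to $\|\cdot\|_{\H^{-1}}$. Granting that, the standard Krasnosel'skii--Mann convergence theorem (e.g., \cite{Bauschke2011}) applied in the Hilbert space $(\RR^{nN},\|\cdot\|_{\H^{-1}})$ yields that $\{\v^k\}$ converges to a fixed point of $\S$, and $\mathrm{Fix}(\S) = \mathrm{Fix}(\T)$ since $\H$ is invertible. Because all norms on $\RR^{nN}$ are equivalent, convergence in $\|\cdot\|_{\H^{-1}}$ is convergence in the usual sense.

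The heart of the argument is the averagedness claim, and here is where the hypotheses on $\H$ enter. First, $\T$ nonexpansive in the Euclidean norm need not be nonexpansive in $\|\cdot\|_{\H^{-1}}$, so I would \emph{not} try to write $\S$ as a convex combination of the identity and an $\|\cdot\|_{\H^{-1}}$-nonexpansive map directly. Instead I would prove the averagedness inequality by hand: show there is a constant $\alpha\in(0,1)$ and a map $R$ with $\S = (1-\alpha)\I + \alpha R$ such that for any fixed point $\v^*$ of $\T$,
\begin{equation*}
\|\v^{k+1}-\v^*\|_{\H^{-1}}^2 \le \|\v^k-\v^*\|_{\H^{-1}}^2 - c\,\|\v^{k+1}-\v^k\|_{\H^{-1}}^2
\end{equation*}
for some $c>0$. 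Expanding the left side using $\v^{k+1}-\v^k = \H(\T(\v^k)-\v^k)$ and the Euclidean nonexpansiveness $\|\T(\v^k)-\v^*\|^2 \le \|\v^k-\v^*\|^2$, the cross terms involve $\langle \T(\v^k)-\v^k,\, \v^k-\v^*\rangle$ in the Euclidean metric; the key algebraic identity $\|\T(\v^k)-\v^*\|^2 = \|\v^k-\v^*\|^2 + 2\langle \v^k-\v^*,\T(\v^k)-\v^k\rangle + \|\T(\v^k)-\v^k\|^2$ converts this into a bound involving only $\|\v^k-\v^*\|^2$ and $\|\T(\v^k)-\v^k\|^2 = \|\H^{-1}(\v^{k+1}-\v^k)\|^2$. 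The condition that the largest eigenvalue of $\H$ is strictly below $1$ (equivalently $\I-\H \succ 0$) is exactly what makes the residual coefficient strictly positive: one needs $2\H - \H^2 \succ 0$ in the appropriate sense, i.e. $\H(2\I-\H)\succ 0$, which holds since $0 \prec \H \prec \I \prec 2\I - \H$... let me restate: since $0\prec\H$ and $\H\prec\I$, we have $2\I-\H\succ\I\succ 0$, hence $\H(2\I-\H)\succ 0$.

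The main obstacle I anticipate is getting the constants right in this computation so that the residual term $\|\v^{k+1}-\v^k\|$ appears with a strictly positive coefficient \emph{uniformly in $k$} — this is what both upgrades Fej\'er monotonicity to genuine convergence (via summability of $\sum_k \|\v^{k+1}-\v^k\|^2$, forcing $\T(\v^k)-\v^k\to 0$) and, combined with boundedness of $\{\v^k\}$ (immediate from Fej\'er monotonicity) and closedness of $\mathrm{Fix}(\T)$ (nonexpansive maps have closed fixed-point sets), pins down the limit by the usual demiclosedness-type argument, which in finite dimensions is elementary: any convergent subsequence has its limit in $\mathrm{Fix}(\T)$ by continuity, and Fej\'er monotonicity toward each such limit forces the whole sequence to converge to it. A secondary bookkeeping point is confirming $\mathrm{Fix}(\I+\H(\T-\I)) = \mathrm{Fix}(\T)$, which is immediate: $\v = \v + \H(\T(\v)-\v)$ iff $\H(\T(\v)-\v)=0$ iff $\T(\v)=\v$ since $\H$ is invertible. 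I would also remark that the case $\H = \rho\I$ recovers \eqref{eq:MannFG}, so Theorem~\ref{thm:FilteredMann} strictly generalizes the scalar Mann iteration.
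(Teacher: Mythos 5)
Your proof is correct and is essentially the paper's argument: both hinge on the quasi-Fej\'er inequality $\|\v^{k+1}-\v^*\|_{\H^{-1}}^2 \le \|\v^{k}-\v^*\|_{\H^{-1}}^2 - c\,\|\v^{k}-\T(\v^{k})\|_{\H^{-1}}^2$ followed by the standard endgame (summability of residuals, boundedness, continuity at a subsequential limit, Fej\'er monotonicity toward that limit), the only difference being bookkeeping --- the paper derives the inequality eigencoordinate-by-eigencoordinate from the spectral decomposition of $\H$ and the convexity identity, while you expand once globally in the $\H^{-1}$ inner product using Euclidean nonexpansiveness in the form $2\langle \v^k-\v^*,\,\T(\v^k)-\v^k\rangle + \|\T(\v^k)-\v^k\|^2 \le 0$. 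One correction to your side remark: the condition that makes the residual coefficient positive is $\I-\H\succ 0$ (i.e.\ $\lambda_{\max}(\H)<1$, exactly the stated hypothesis), not $2\H-\H^2\succ 0$; for instance $\H=\rho\I$ with $\rho\in(1,2)$ and $\T=-\I$ satisfies the latter yet the iteration \eqref{eq:PrecondMann} diverges, and indeed your own expansion yields the decrease term $\Delta^T(\I-\H)\Delta$ with $\Delta=\T(\v^k)-\v^k$, which is precisely what requires $\H\prec\I$ for merely nonexpansive $\T$.
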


\medskip

The idea of the proof is similar to the proof of convergence for Mann iteration given in \cite{boyd-primer}, but using a norm that weights differently the orthogonal components arising from the spectral decomposition of $\H$.  The proof is contained in the appendix.  

We note that in the case that each $f_i$ is a proper, closed, convex function on $\RR^n$ and $F_i$ is the proximal map as in \eqref{eq:prox}, then the map $2\F - \I$ is nonexpansive, so this preconditioning method can be used to find a solution to the problem in \eqref{eq:split}.  The asymptotic rate of convergence with this method is not significantly different from that obtained with the isotropic scaling obtained with a scalar $\rho$.  However, we have found this approach to be useful for accelerating convergence in certain tomography problems in which various frequency components converge at different rates, leading sometimes to visible oscillations in the reconstructions as a function of iteration number.  An appropriate choice of the preconditioner $H$ can dampen these oscillations and provide faster convergence in the initial few iterations. We will explore this example and related algorithmic considerations further in a future paper.  

\subsection{Beyond nonexpansive maps}  

The iterative algorithms obtained from \eqref{eq:MannFG} and \eqref{eq:PrecondMann} give guaranteed global convergence when $T$ is nonexpansive and $\rho$ (or $H$) satisfy appropriate conditions.  However, the iterates of \eqref{eq:MannFG} may still be convergent for more general maps $T$.  We illustrate this behavior in Case~1 of Section~\ref{sec:stochastic}.

In fact, when $T$ is differentiable at a fixed point, the rate of convergence is closely related to the spectrum of the linearization of $T$ near this fixed point.  The parameter $\rho$ in \eqref{eq:MannFG} maintains a fixed point at $\w^* = T(\w^*)$ but changes the linear part of the iterated map to have eigenvalues $\mu_j = \rho \lambda_j + (1-\rho)$, where $\lambda_1, \ldots, \lambda_n$ are the eigenvalues of the linear part of $T$.  The iterates of \eqref{eq:MannFG} converge locally exactly when all of these $\mu_j$ are strictly inside the unit disk in the complex plane.  This can be achieved for sufficiently small $\rho$ precisely when the real part of each $\lambda_j$ is  less than 1.  Since there is no constraint on the complex part of the eigenvalues, the map $T$ may be quite expansive in some directions.  In this case, the optimal rate of convergence is obtained when $\rho$ is chosen so that the eigenvalues $\mu_j$ all lie within a minimum radius disk about the origin.

The use of $\rho$ to affect convergence rate and/or to promote convergence is closely related to the ideas of  overrelaxation and underrelaxation as applied in a variety of contexts. See e.g. \cite{Hackbusch94} for further discussion in the context of linear systems. In the current setting, the use of $\rho < 1/2$ is a form of underrelaxation that is related to methods for iteratively solving ill-posed linear systems.   In the following theorem, the main idea is to make use of underrelaxation in order to shrink the eigenvalues of the resulting operator to the unit disk and thus guarantee convergence.

\begin{theorem}  \label{thm:spectrum}  (Local convergence of Mann iterates)
Let $\F: \Re^n\rightarrow \Re^n$ and $\G: \Re^n\rightarrow \Re^n$ be maps such that $\T = (2\G-\I)(2\F-\I)$ has a fixed point $\w^*$.  Suppose that $\T$ is differentiable at $\w^*$ and that the Jacobian of $\T$ at $\w^*$ has eigenvalues $\lambda_1, \ldots, \lambda_n$ with the real part of $\lambda_j$ strictly less than 1 for all $j$.  Then there is $\rho \in (0,1)$ and an open set $U$ containing $\w^*$ such that for any initial point $\w^0$ in $U,$ the iterates defined by \eqref{eq:MannFG} converge to $\w^*$.
\end{theorem}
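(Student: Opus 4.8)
The plan is to linearize the Mann iteration $\w \mapsto (1-\rho)\w + \rho\T(\w)$ at the fixed point $\w^*$ and apply the standard local stability criterion for fixed points of smooth maps. Write $A$ for the Jacobian of $\T$ at $\w^*$, so that the Jacobian of the Mann iteration map $M_\rho(\w) = (1-\rho)\w + \rho\T(\w)$ at $\w^*$ is $(1-\rho)\I + \rho A$, with eigenvalues $\mu_j(\rho) = (1-\rho) + \rho\lambda_j$ for $j = 1,\ldots,n$. The key elementary fact is that $\w^*$ is a locally attracting fixed point of a $C^1$ map whenever the spectral radius of its Jacobian there is strictly less than $1$; this is a routine consequence of the mean value inequality in an adapted norm (the map is a contraction in a neighborhood with respect to a norm in which the linear part has operator norm $<1$). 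So it suffices to exhibit $\rho \in (0,1)$ for which every $\mu_j(\rho)$ lies strictly inside the unit disk.

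The first step is the algebraic claim: if $\operatorname{Re}\lambda_j < 1$ for all $j$, then $|\mu_j(\rho)| < 1$ for all $j$ for all sufficiently small $\rho > 0$. Writing $\lambda_j = a_j + i b_j$ with $a_j < 1$, we have $\mu_j(\rho) = 1 - \rho(1 - a_j) + i\rho b_j$, so $|\mu_j(\rho)|^2 = \big(1 - \rho(1-a_j)\big)^2 + \rho^2 b_j^2 = 1 - 2\rho(1-a_j) + \rho^2\big((1-a_j)^2 + b_j^2\big)$. For small $\rho > 0$ the linear term $-2\rho(1-a_j)$ dominates the quadratic term, giving $|\mu_j(\rho)|^2 < 1$; explicitly this holds whenever $0 < \rho < 2(1-a_j)/((1-a_j)^2 + b_j^2)$. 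Since there are finitely many eigenvalues, taking $\rho$ below the minimum of these finitely many positive thresholds (and below $1$) makes the spectral radius of $(1-\rho)\I + \rho A$ strictly less than $1$.

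The second step invokes the local attractivity lemma. Fix such a $\rho$, let $r < 1$ be the spectral radius of $(1-\rho)\I + \rho A$, and pick $r < r' < 1$. By a standard argument there is a norm $\|\cdot\|_*$ on $\RR^n$ (equivalent to the Euclidean norm) with $\|(1-\rho)\I + \rho A\|_* \le r'$. Since $\T$ is differentiable at $\w^*$, the map $M_\rho$ is differentiable there with $DM_\rho(\w^*) = (1-\rho)\I + \rho A$, so there is an open ball $U$ around $\w^*$ on which $\|M_\rho(\w) - \w^*\|_* \le r''\|\w - \w^*\|_*$ for some $r' < r'' < 1$, using $M_\rho(\w^*) = \w^*$ and the first-order expansion with the error term absorbed for $\w$ close enough to $\w^*$. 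Hence $M_\rho$ maps $U$ into itself and is a contraction there, so for any $\w^0 \in U$ the iterates $\w^k = M_\rho^k(\w^0)$ converge to $\w^*$ geometrically. This is exactly the conclusion, since $\eqref{eq:MannFG}$ is iteration of $M_\rho$.

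The main obstacle, such as it is, is purely one of hypotheses rather than difficulty: $\T$ is only assumed differentiable \emph{at} $\w^*$, not $C^1$ in a neighborhood, so one cannot directly apply a mean value inequality to $M_\rho$ on $U$. The fix is to argue directly from the definition of differentiability at the single point $\w^*$: write $M_\rho(\w) - \w^* = \big((1-\rho)\I + \rho A\big)(\w - \w^*) + o(\|\w - \w^*\|)$, bound the linear part by $r'\|\w-\w^*\|_*$, and choose $U$ small enough that the $o(\cdot)$ remainder is at most $(r''-r')\|\w - \w^*\|_*$ there; no derivative information away from $\w^*$ is needed. Everything else is the textbook local stability argument, and the only quantitative content is the eigenvalue threshold computed in the first step.
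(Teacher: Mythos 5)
Your proposal is correct and follows essentially the same route as the paper: linearize the Mann map at $\w^*$, observe that $\operatorname{Re}\lambda_j<1$ forces the eigenvalues $(1-\rho)+\rho\lambda_j$ into the open unit disk for small $\rho$, and then conclude local geometric convergence from an adapted norm together with differentiability at the single point $\w^*$. The only difference is cosmetic: you quote the standard existence of a norm making the operator norm close to the spectral radius (and give an explicit $\rho$ threshold), whereas the paper constructs that norm explicitly via Schur triangulation and a diagonal $\eps$-scaling.
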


The proof of this theorem is given in Appendix~\ref{AppProofs}.

\subsection{Newton's Method}

By formulating the CE as a solution to $\F(\v) - \G_\mu(\v) = 0$, we can apply a variety of root-finding methods to find solutions.  Likewise, rewriting \eqref{eq:fixed} as $T(\v) - \v = 0$ gives the same set of options.  

Let $H$ be a smooth map from $\RR^n$ to $\RR^n$.  The basic form of Newton's method for solving $H(x) = 0$ is to start with a vector $x_0$ and look for a vector $dx$ to solve $H(x_0 + dx) = 0$.  A first-order approximation gives $J_H(x_0) dx = -H(x_0)$, where $J_H(x_0)$ is the Jacobian of $H$ at $x_0$.  If this Jacobian is invertible, this equation can be solved for $dx$ to give $x_1 = x_0 + dx$ and the method iterated.  There are a wide variety of results concerning the convergence of this method with and without preconditioning, with various inexact steps, etc.  An overview and further references are available in \cite{Nocedal2006}.  

For large scale problems, calculating the Jacobian can be prohibitively expensive.  
The Jacobian-Free Newton-Krylov (JFNK) method is one approach to avoid the need for a full calculation of the Jacobian.  Let $J = J_H(x_0)$.  The key idea in Newton-Krylov methods is that instead of trying to solve $J dx = -H(x_0)$ exactly, we instead minimize $\|H(x_0) + J dx\|$ over the vectors $dx$ in a Krylov subspace, $K_j$.  This subspace is defined by first calculating the residual $r = -H(x_0)$ and then taking 
$$ K_j = \text{span}\{r, Jr, \ldots, J^{j-1} r\}. $$
The basis in this form is typically highly ill-conditioned, so the Generalized Minimal RESidual method (GMRES) is often used to produce an orthonormal basis and solve the minimization problem over this subspace.  This form requires only multiplication of a vector by the Jacobian, which can be approximated as 
$$ Jr \approx \frac{H(x_0 + \epsilon r) - H(x_0)}{\epsilon}. $$
Applying this to produce $K_j$ requires $j$ applications of the map $H$ together with the creation of the Arnoldi basis elements, which can then be used to find the minimizing $dx$ by reducing to a standard least squares problem of dimension $j$. Various stopping conditions can be used to determine an appropriate $j$.   These calculations take the place of the solution of $J dx = -H(x_0)$.  In cases for which there are many contracting directions and only a few expanding directions for Newton's method near the solution point, the JFNK method can be quite efficient.  
A more complete description with a discussion of the benefits and pitfalls, approaches to preconditioning, and many further references is contained in \cite{Knoll2004}.

We note in connection with the previous section that if $\H$ is chosen to be $(\I - J_\T(\v^k))^{-1}$ in \eqref{eq:PrecondMann}, then the choice of $\v^{k+1}$ in Theorem~\ref{thm:FilteredMann} is an exact Newton step applied to $\I - \T$.  That is, the formula for the step in Newton's method in this case is
$$ \v^{k+1}-\v^k = -(\I - J_T(v^k))^{-1} (\I - \T) \v^k, $$
or
$$ \v^{k+1} = (\I - \H) \v^k + \H \T \v^k,$$
which is the same as the formula in Theorem~\ref{thm:FilteredMann}.  

In the examples below, we use standard Newton's method applied to both $\F-\G$ and $\T - \I$ in the first example and JFNK applied to $\F-\G$ in the second.  Because of the connection with Mann iteration just given, we use the term Newton Mann to describe Newton's method applied to $\T-\I$.

\subsection{Other approaches}  \label{sec:Expand}

An alternative approach is to convert the CE equations back into an optimization framework by considering the residual error norm given by
\begin{align} 
R(\v) \stackrel{\Delta}{=}  \left \| \F(\v) - \G_\mu(\v) \right \|
\label{Residual-Norm}
\end{align}  
and minimizing $R^2(\v)$ over $\v$.  
Assuming that a solution of the CE equations exists, then that solution is also a minimum of this objective function.  In the case that $\F$ is twice continuously differentiable, a calculation using the facts that $R(\v^*) = 0$ and that $\G_\mu$ is linear shows that the Hessian of $R^2(\v)$ is $2 A^T A + O(\|\v - \v^*\|)$, where $A\v = J_\F(\v^*)\v - \G_\mu\v$.    Hence $R^2(\v)$ is locally convex near $\v^*$ as long as $A$ has no eigenvalue equal to 0.  Since $\G_\mu$ is a projection, its only eigenvalues are 0 and 1, hence this is equivalent to saying that $J_\F(\v^*)$ does not have 1 as an eigenvalue.  If $A$ does have an eigenvalue 0, then a perturbation of the form $\F_\epsilon(\v) = \F(\v) + \epsilon \v$ produces a unique solution, which can be followed in a homotopy scheme as $\epsilon$ decreases to 0.

One possible algorithm for this approach is the Gauss-Newton method, which can be used to minimize a sum of squared function values and which does not require second derivatives.

We note that the residual error of equation~(\ref{Residual-Norm}) is also useful as a general measure of convergence when computing the CE solution; we use this in plots below.

Other candidate solution algorithms include the forward-backward algorithm and related algorithms as presented in \cite{combettes2011proximal}.  
We leave further investigation of efficient algorithms for solving the CE equations to future research.

\section{Experimental Results}

Here we provide some computational examples of varying complexity. 
For each of these examples, at least one of the component maps $F_i$ is not a proximal mapping,
so the traditional optimization formulation of equations~(\ref{eq:RegularizedInverse}) or~(\ref{eq:split}) is not applicable.

We start with a toy model in 2 dimensions to illustrate the ideas, then follow with some more complex examples.

\subsection{Toy model}
In this example we have $v_1 = (v_{11}, v_{12})^T$, $v_2 = (v_{21}, v_{22})^T$, both in $\RR^2$, and maps $F_1$ and $F_2$ defined by
\begin{align*}
F_1(v_1) &= \left(I + \sigma^2 {A^T A} \right)^{-1}(v_1 + \sigma^2 A^T y) \\
F_2(v_2) &=   1.1(v_{21} + 0.2, v_{22}-0.2 \sin(2 v_{22}))^T.
\end{align*}
In this case, $F_1$ is a proximal map as in \eqref{eq:prox} corresponding to $f(x) = \|Ax-y\|^2/2$ and $F_2$ is a weakly expanding map designed to illustrate the properties of consensus equilibrium.  
We use $\sigma = 1$ and 
$$
A = \left[ \begin{array}{cc} 0.3&0.6\\ 0.4&0.5 \end{array} \right], \ \ y = \left[ \begin{array}{cc} 1\\ 1 \end{array} \right]. 
$$
We take $\mu_1 = \mu_2 = 0.5$ and so write $G$ for $G_\mu$.  We apply Newton iterations  to $\F(\v) - \G(\v) = 0$ and to the fixed point formulation $T(\v) - \v = 0$.  In both cases, the Jacobian of $F_2$ is evaluated only at the initial point.  

\begin{figure} \label{fig:fields} 
\begin{center}
\begin{minipage}{0.65 \textwidth}
\includegraphics[width= 0.99\textwidth]{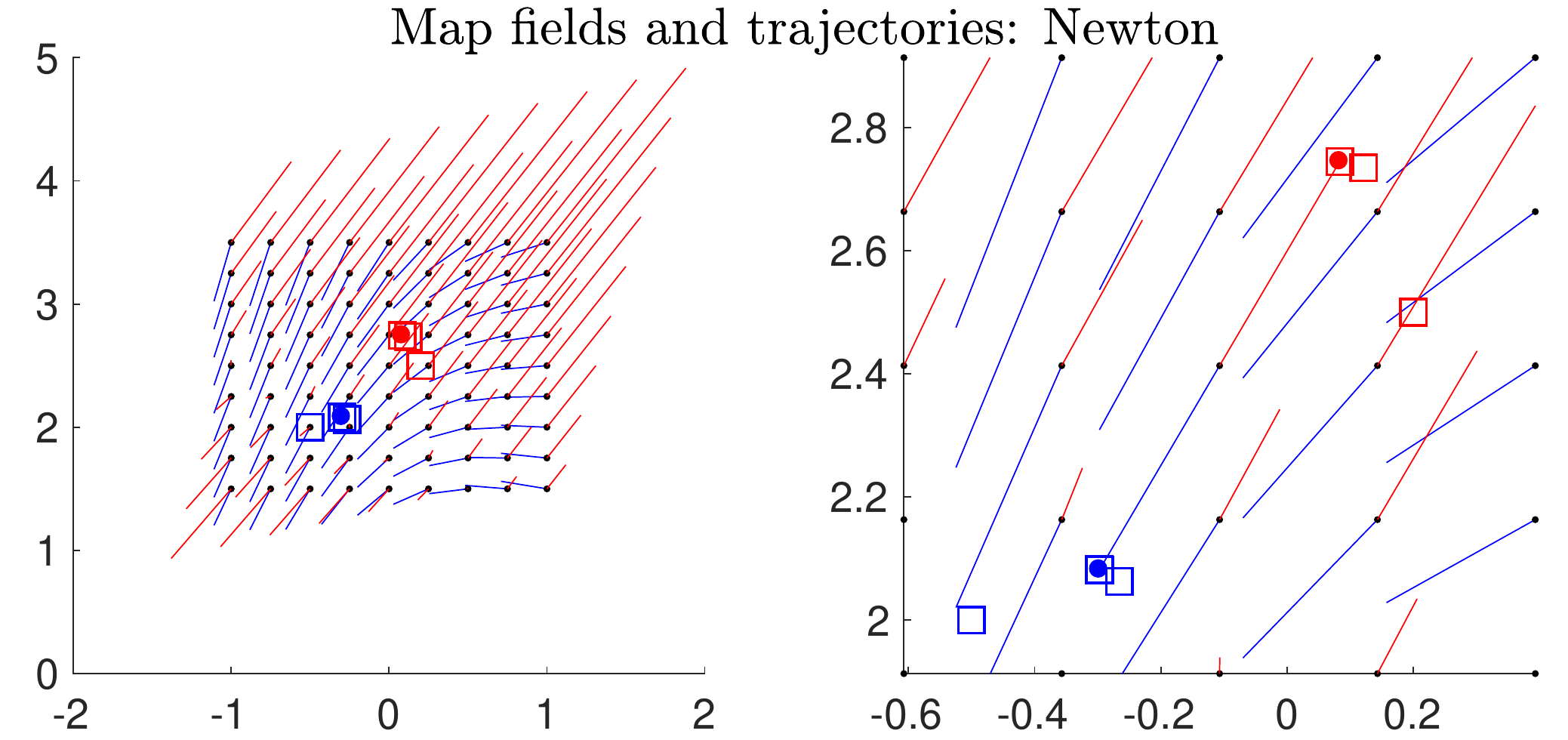}
\end{minipage}
\hfill
\begin{minipage}{0.3 \textwidth}
\includegraphics[width=0.95 \textwidth]{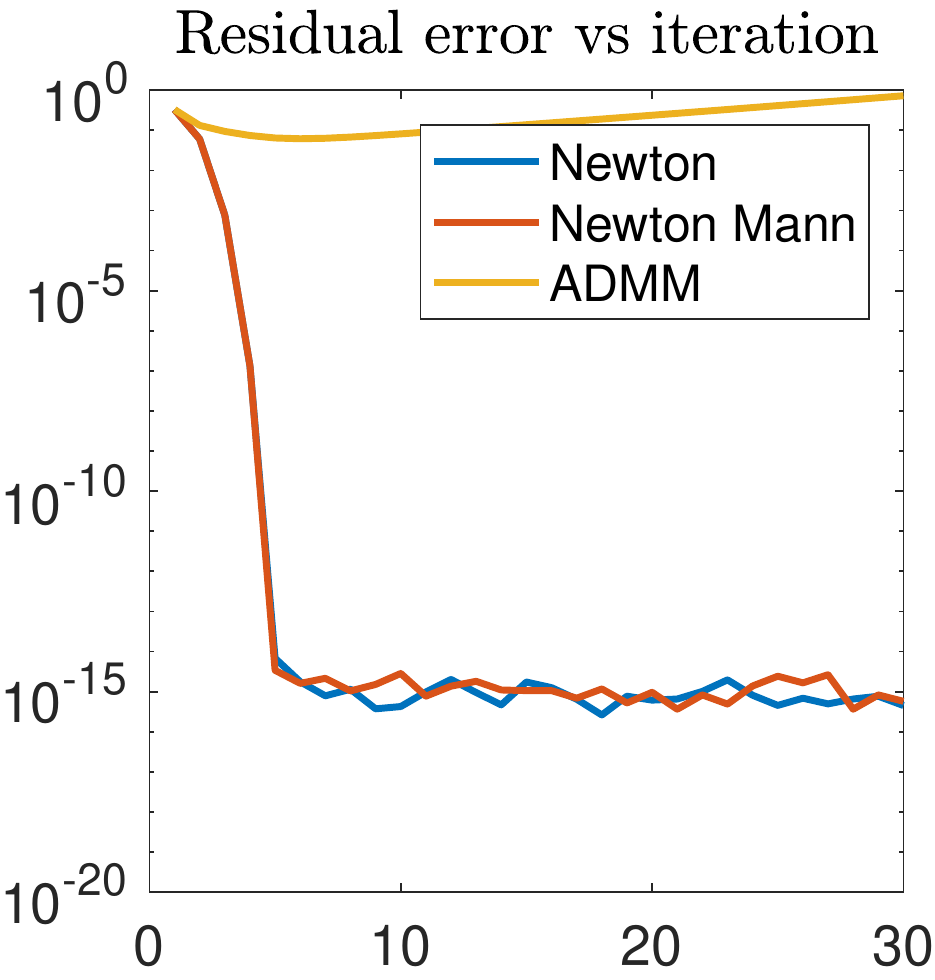}
\end{minipage}
\end{center}
\caption{Left:  Map fields and trajectories for 2-dimensional toy example using Newton's method applied to solve $\F(\v) - \G(\v) = 0$.   Blue segments show the map $v_1 \mapsto F_1(v_1)$, red segments show $v_2 \mapsto F_2(v_2)$, with black dots showing the common endpoints of these maps.  Blue and red open squares show the points $v_1^k$ and $v_2^k$, respectively.  Filled red and blue circles show the CE solution.  Middle:  Zoom in near the fixed point of the plot on the left.   Right:  Error $\|\F(\v^k) - \G(\v^k)\|$ as a function of iteration for Newton's method applied to $\F - \G$, Newton's method applied to $\T - \I$ (labeled as Newton Mann), and standard Mann iteration with $\rho = 0.5$ (labeled as ADMM).  }
\end{figure}

Figure~\ref{fig:fields} shows the vectors obtained from each of the maps $F_1$ and $F_2$.  
Blue line segments are vectors from a point $v_1$ to $F_1(v_1)$, and red line segments are vectors from a point $v_2$ to $F_2(v_2)$.  
The starting points of each pair of red and blue vectors are chosen so that they have a common ending point, signified by black dot.  
Open squares show the trajectories of $v_1^k$ in blue and $v_2^k$ in red.  
The trajectories converge to points for which the corresponding red and blue vectors have a common end point and are equal in magnitude and opposite in direction;  this is the consensus equilibrium solution.  The plots shown are for Newton's method  applied to $\F-\G$; the plots for Newton's method applied to $\T - \I$ are similar (not shown).  In the right panel of this figure, we use the true fixed point to plot error versus iterate for this example using all 3 methods.   The expansion in $F_2$ prevents ADMM from converging in this example.

\subsection{Stochastic matrix}  \label{sec:stochastic}
The next example uses the proximal map form for $F_1$ as in the previous example, although now with dimension $n=100$.  
$A$ and $y$ were chosen using the random number generator rand in Matlab, approximating the uniform distribution on $[0,1]$ in each component.  
The map $F_2$ has the form $F_2(v) = Wv$;  here $W$ is constructed by first choosing entries at random in the interval $[0,1]$ as for $A$, then replacing the diagonal entry by the maximum entry in that row (in which case the maximum entry may appear twice in one row), and then normalizing so that each row sums to 1.  
This mimics some of the features in a weight matrix appearing in denoisers such as non-local means \cite{buades2005review} but is designed to allow us to compute an exact analytic solution of the CE equations.  
In particular, since $W$ is not symmetric, $F_2$ cannot be a proximal map, as shown in \cite{sreehari2016TCI}.  

In order to illustrate possible convergence behaviors, 
we first fix the matrices $A$ and $W$ and the vector $y$ as above
and then use a one-parameter family of maps $F_{2,r}(v) = rWv + (1-r) I/2$.  When $0 \leq r \leq 1$, this map averages $W$ and $I/2$.  The map $I/2$ is a proximal map as in \eqref{eq:prox} with $\sigma = 1$ and $f_i(v) = \|v\|^2/2$;  i.e., the proximal map associated with a quadratic regularization term.  In the framework of Corollary~\ref{cor:T}, the map $F_{2,r}$ satisfies $2F_{2,r}(v)-v = r(2W - I)v$.  Hence the scaling of $r$ controls the expansiveness of one of the component maps in $2\F-\I$, and hence the expansiveness of the operator in \eqref{eq:fixed} through the averaging operator $G_\mu$.    
For the examples here, we choose $r$ to be $1.02$ and $1.06$.  As described below, with appropriate choices of parameters, the Jacobian-free Newton Krylov method converges for both examples, while ADMM converges for the first one only.  

Recall that if the Lipschitz constant, $L(T)$, is strictly less than $1$, then the operator $T$ is a contraction,
and if $L(T)\leq 1$ we say it is nonexpansive.
Moreover, for linear operators, $L(T)=\sigma_{max}$ where $\sigma_{max}$ is the maximum singular value of $T$;
and $\sigma_{max} \geq | \lambda_{max} |$ where $\lambda_{max}$ is the eigenvalue with greatest magnitude.

\begin{figure}[t] \label{fig:trajectories2} 
\begin{center}
\includegraphics[width=0.44 \textwidth]{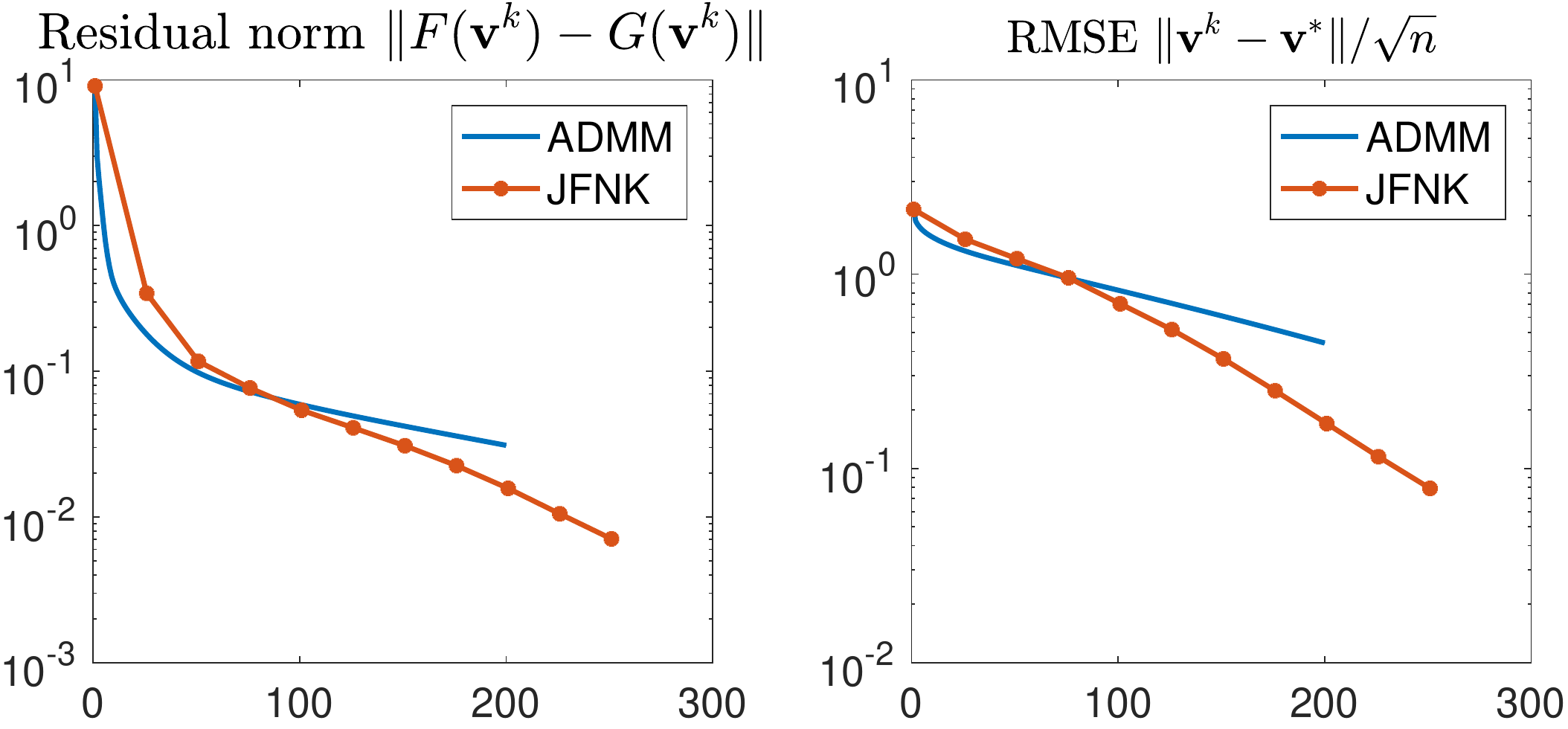}
\mbox{\hspace{12pt}}
\includegraphics[width=0.44 \textwidth]{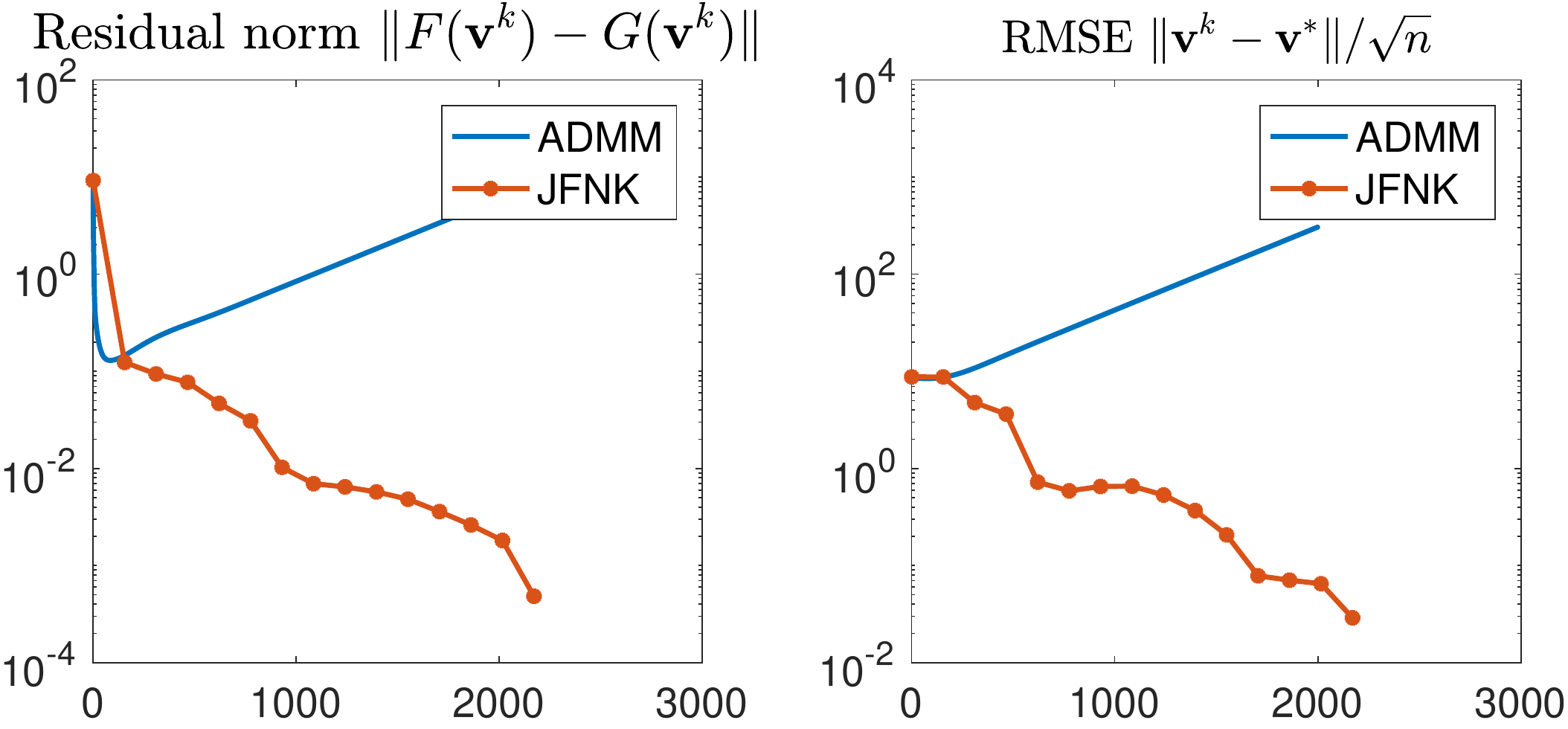}
\end{center}
\caption{Residual norm $\|F(\v^k) - \G(\v^k)\|$ and root mean square error $\|v^k - v^*\|/\sqrt{n}$ versus iteration.
Left 2 panels: (Case~1) $T$ has Lipschitz constant larger than 1 but all eigenvalues have real part strictly less than 1.  Both methods converge.
Right 2 panels:  (Case~2) $T$ has an eigenvalue with real part larger than 1.  JFNK converges, while ADMM diverges.}
\end{figure}

\bigskip

\begin{itemize}
\item[Case 1,]  {\em $r=1.02$}:
In this case, $\T$ has Lipschitz constant $L(\T) > 1$, and the conditions of Theorem~\ref{thm:FilteredMann} or similar theorems on the convergence of Mann iteration for the convergence of nonexpansive maps do not hold.
However, in this case, $\T$ is affine (linear map plus constant) and all eigenvalues of the linear part of $(\T + \I)/2$ lie strictly inside the unit circle.   From \eqref{eq:MannFG} with $\rho = 1/2$ and basic linear algebra, this means that Mann iteration converges.  This is confirmed in Figure~\ref{fig:trajectories2}.  In this example, convergence for Mann iteration can be improved by taking $\rho$ to be $0.8$, in which case the convergence is marginally better than that for JFNK.  For this example, we used a Krylov subspace of dimension 10, so that each Newton step requires 10 function evaluations.  This is indicated by closed circles in the plots.  
\medskip

\item[Case 2,]  {\em $r=1.06$}:
In this case, $\T$ has Lipschitz constant $L(\T) > 1$, and there is an eigenvalue with real part approximately $1.0039$, so averaging $\T$ with the identity as in Mann iteration will maintain an eigenvalue larger than one.  In particular, Mann iteration with $\rho = 1/2$ (labelled as ADMM) does not converge, but the JFNK algorithm does.  For this example, we used a Krylov subspace of dimension 75, so that each Newton step requires 75 function evaluations.  This is indicated by closed circles in the plots.
\end{itemize}
\medskip

\subsection{Image Denoising with Multiple Neural Networks}
The third example we give is an image denoising problem using multiple deep neural networks. This problem is more complex in that we use several neural networks, none of which is tuned to match the noise in the image to be denoised.  Nevertheless, we show that CE is often able to outperform each individual network.
The images and code for this section are available at \cite{CEcode}.

The forward model of image denoising is described by the following linear equation:
\begin{equation*}
y = x + \eta,
\end{equation*}
where $x \in \RR^n$ is latent unknown image, $\eta \sim \mathcal{N}(0,\sigma_\eta^2I)$ is i.i.d. Gaussian noise, and $y \in \RR^n$ is the corrupted observation. Our motivation is to find an estimate $x^* \in \RR^n$ by solving the consensus equilibrium equation analogous to the classical maximum-a-posteriori approach:
\begin{equation}
x^*
= \underset{x\in \RR^n}{\mbox{argmin}} \;\; \frac{1}{2\sigma_\eta^2}\|y - x\|^2 - \log p(x),
\end{equation}
where $p(x)$ is the prior of $x$.  However, instead of a prior function, which would induce a proximal map, we will use a set of convolutional neural networks, which will play the role of regularization in the way that a prior term does, but which are almost certainly not themselves proximal maps for any function.  

To define the CE operators $F_i$, we consider a set of $K$ image denoisers. Specifically, we use the denoising convolutional neural network (DnCNN) proposed by Zhang et al. \cite{Zhang2017_tip}. In the code provided by the authors \footnote{Code available at https://github.com/cszn/ircnn}, there are five DnCNNs trained at five different noise levels: $\sigma_1 = 10/255$, $\sigma_2 = 15/255$, $\sigma_3 = 25/255$, $\sigma_4 = 35/255$ and $\sigma_5 = 50/255$. In other words, the user has to choose the appropriate DnCNN to match the actual noise level $\sigma_\eta$. In the CE framework, we see that $F_i$ is the operator
\begin{equation}
F_i(v_i) = \mbox{DnCNN}(v_i, \; \mbox{with denoising strength} \; \sigma_{i}).
\end{equation}
The $(K+1)^{\mathrm{st}}$ CE operator $F_{K+1}$ is the proximal map of the likelihood function:
\begin{equation} \label{eq:FK1}
F_{K+1}(v_{K+1}) = \underset{x\in \RR^n}{\mbox{argmin}} \;\; \frac{1}{2\sigma_\eta^2}\|y-x\|^2 + \frac{1}{2\sigma^2}\|v_{K+1}-x\|^2,
\end{equation}
where $\sigma$ is an internal parameter controlling the strength of the regularization $\|v_{K+1}-x\|^2$. In this example, we set $\sigma = \sigma_\eta$ for simplicity.

\begin{figure}[ht]
\begin{tabular}{cccc}
& & & \\
\includegraphics[width=0.22\linewidth]{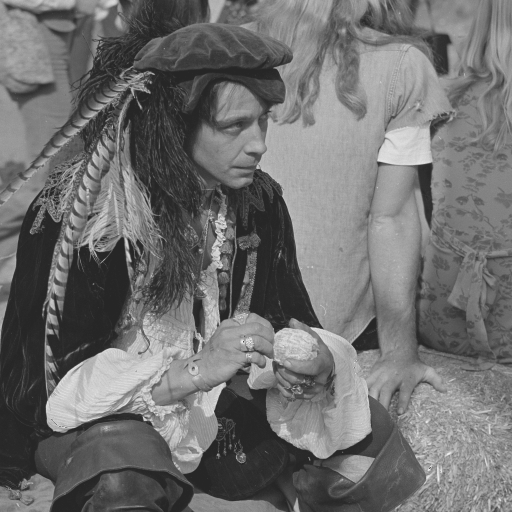}&
\includegraphics[width=0.22\linewidth]{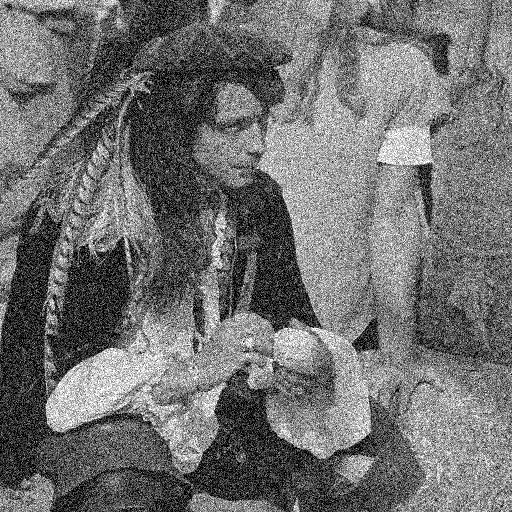}&
\includegraphics[width=0.22\linewidth]{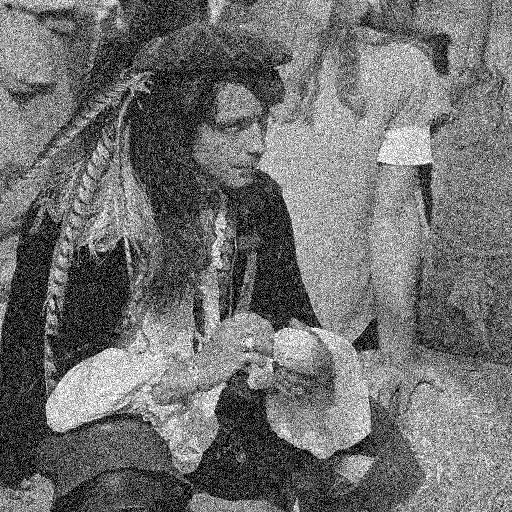}&
\includegraphics[width=0.22\linewidth]{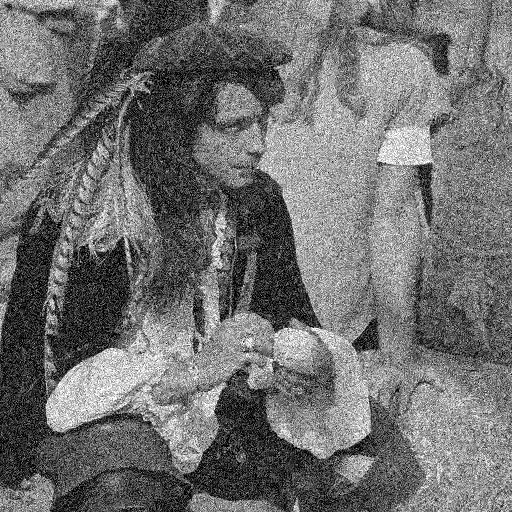}\\
Noiseless & Noisy $\sigma_\eta = 40/255$ & $\mbox{DnCNN}_{10}$, 16.67dB & $\mbox{DnCNN}_{15}$, 17.53dB\\
\includegraphics[width=0.22\linewidth]{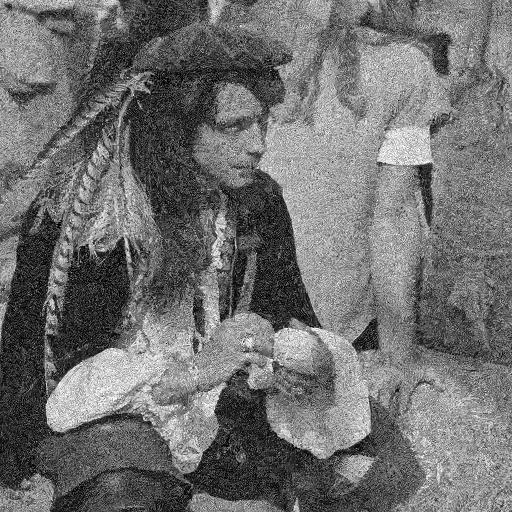}&
\includegraphics[width=0.22\linewidth]{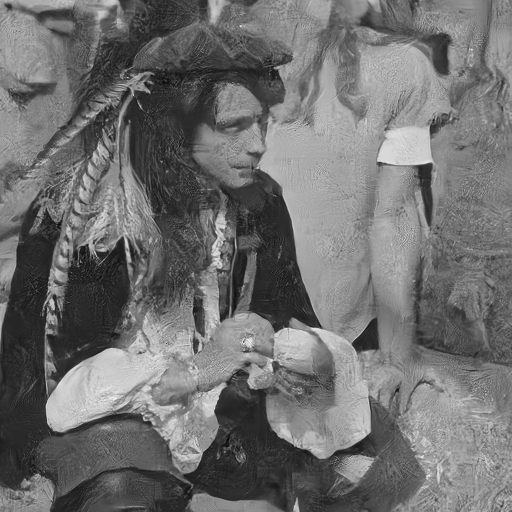}&
\includegraphics[width=0.22\linewidth]{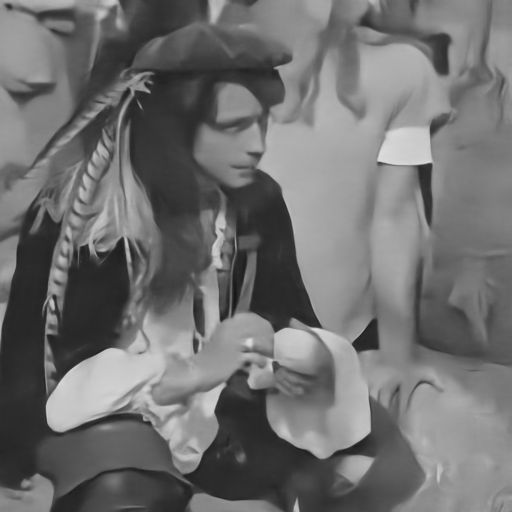}&
\includegraphics[width=0.22\linewidth]{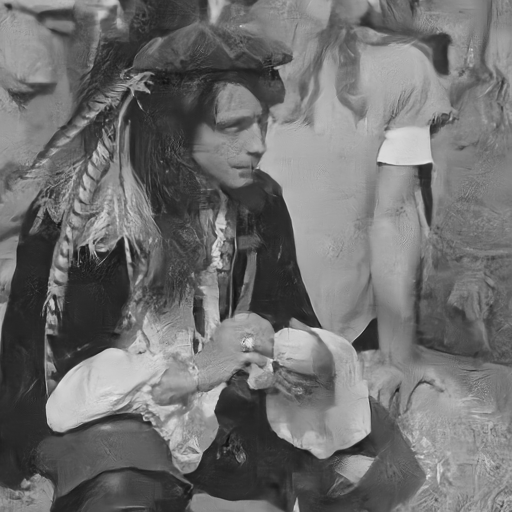}\\
 $\mbox{DnCNN}_{25}$, 19.92dB & $\mbox{DnCNN}_{35}$, 26.44dB & $\mbox{DnCNN}_{50}$, 27.39dB & CE, 27.77dB\\
 \end{tabular}
\caption{Image denoising experiment for Man512 when $\sigma_\eta = 40/255$.
Notice that the consensus equilibrium (CE) result has the highest SNR when compared to individual convolutional neural network denoisers train on varying noise levels.}
\label{fig:denoising results1}
\end{figure}

To make the algorithm more adaptive to the data, we use weights $\mu_i = \frac{p_i}{ \sum_{i=1}^{K+1} p_i}$, where
\begin{equation}  \label{eq:weight}
p_i = \exp\left\{-\frac{(\sigma_\eta - \sigma_i)^2}{2h^2}\right\},  \quad \mbox{and} \quad  p_{K+1} = \sum_{i=1}^K p_i.
\end{equation}
In this pair of equations, $p_i$ measures the deviation between the actual noise level $\sigma_\eta$ and the denoising strength of the neural networks $\sigma_i$. The parameter $h = 5/255$ controls the cut off. Therefore, among the five neural networks, $p_i$ weights more heavily the relevant networks. The $(K+1)^{\mathrm{st}}$ weight $p_{K+1}$ is the weight of the map to fit to data. Its value is chosen to be the sum of the weights of the denoisers to provide appropriate balance between the likelihood and the denoisers.

\begin{figure}[ht]
\centering
\begin{tabular}{cccc}
\includegraphics[width=0.22\linewidth]{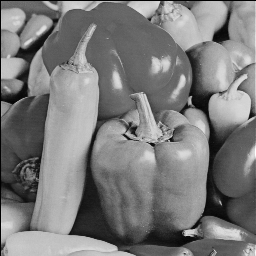}&
\includegraphics[width=0.22\linewidth]{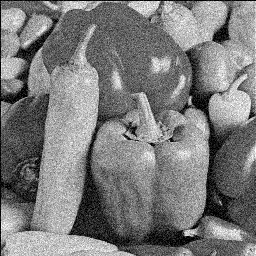}&
\includegraphics[width=0.22\linewidth]{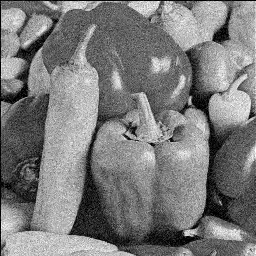}&
\includegraphics[width=0.22\linewidth]{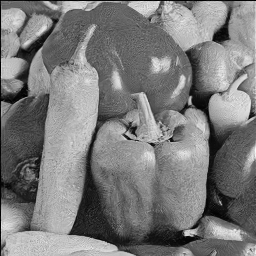}\\
Noiseless & Noisy $\sigma_\eta = 20/255$ & $\mbox{DnCNN}_{10}$, 23.98dB & $\mbox{DnCNN}_{15}$, 28.25dB\\
\includegraphics[width=0.22\linewidth]{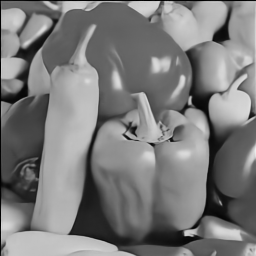}&
\includegraphics[width=0.22\linewidth]{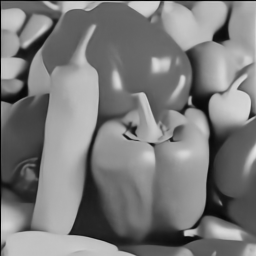}&
\includegraphics[width=0.22\linewidth]{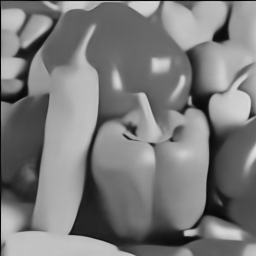}&
\includegraphics[width=0.22\linewidth]{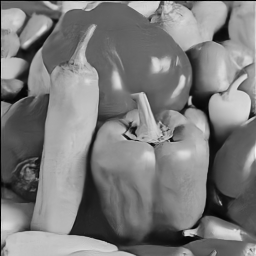}\\
$\mbox{DnCNN}_{25}$, 31.33dB & $\mbox{DnCNN}_{35}$, 29.51dB & $\mbox{DnCNN}_{50}$, 27.93dB & CE, 31.79dB\\
\end{tabular}
\caption{Image denoising experiment for Peppers256 when $\sigma_\eta = 20/255$. 
Notice that the consensus equilibrium (CE) result has the highest SNR when compared to individual convolutional neural network denoisers train on varying noise levels.}
\label{fig:denoising results2}
\end{figure}

\begin{table}[t]
\centering
\caption{Image denoising results for actual noise level $\sigma_\eta \in \{20,30,40\}/255$.}
\begin{tabular}{c|ccccc|cc|c}
      & \multicolumn{5}{|c|}{DnCNN} &  & & Matched \\
Image & 10 & 15 & 25 & 35 & 50 & Baseline & CE & DnCNN\\
\hline
$\sigma = 20/255$ & &         &           &           &            &      &                 & \\
Barbara512 & 23.99 &  	28.02 &  	30.49 &  	28.11 &  	25.71  & 29.80 & \textbf{30.97} & 31.02\\
Boat512    & 23.98 &  	27.92 &  	30.61 &  	28.73 &  	27.03 & 29.86 & \textbf{31.08} & 31.15\\
Cameraman256&24.12 &  	28.04 &  	30.20 &  	28.52 &  	27.20 & 29.88 & \textbf{31.05} & 31.07\\
Hill512    & 23.93 &  	27.81 &  	30.34 &  	28.68 &  	27.20 & 29.78 & \textbf{30.88} & 30.92\\
House256   & 24.03 &  	28.70 &  	33.70 &  	32.32 &  	30.69 & 31.38 & \textbf{33.82} & 33.97\\
Lena512    & 24.07 &  	28.59 &  	33.06 &  	31.13 &  	29.59 & 31.12 & \textbf{33.35} & 33.47\\
Man512     & 23.94 &  	27.89 &  	30.41 &  	28.46 &  	27.02 & 29.79 & \textbf{31.00} & 31.08\\
Peppers256 & 23.98 &  	28.25 &  	31.33 &  	29.51 &  	27.93 & 30.27 & \textbf{31.79} & 31.80\\
\hline
$\sigma = 30/255$ &  &          &           &           &            &   &  &      \\
Barbara512 & 19.49 & 21.01 & 26.86 & 28.49 & 26.15 & 28.49 & \textbf{28.98} & 28.86\\
Boat512    & 19.48 & 21.02 & 26.96 & 28.92 & 27.20 & 28.81 & \textbf{29.38} & 29.36\\
Cameraman256&19.62 & 21.14 & 27.11 & 28.62 & 27.23 & 28.77 & \textbf{29.18} & 29.25\\
Hill512    & 19.48 & 21.01 & 26.78 & 28.91 & 27.38 & 28.79 & \textbf{29.35} & 29.33\\
House256   & 19.44 & 21.05 & 28.48 & 32.17 & 30.68 & 31.24 & \textbf{32.39} & 32.32\\
Lena512    & 19.49 & 21.10 & 28.18 & 31.37 & 29.78 & 30.71 & \textbf{31.73} & 31.69\\
Man512     & 19.48 & 21.01 & 26.87 & 28.77 & 27.21 & 28.73 & \textbf{29.28} & 29.29\\
Peppers256 & 19.48 & 21.00 & 27.40 & 29.45 & 27.96 & 29.25 & \textbf{29.85} & 29.81\\
\hline
$\sigma = 40/255$ &  &          &           &           &            &  &      &\\
Barbara512 & 16.69& 	17.54& 	19.93& 	26.05& 	26.51 & 26.57 & \textbf{27.14} & 27.32\\
Boat512    & 16.66& 	17.52& 	19.93& 	26.50& 	27.36 & 27.02 & \textbf{27.82} & 28.12\\
Cameraman256&16.81& 	17.65& 	20.13& 	26.51& 	27.24 & 26.98 & \textbf{27.68} & 27.96\\
Hill512    & 16.66& 	17.53& 	19.92& 	26.47& 	27.56 & 27.05 & \textbf{27.90} & 28.23\\
House256   & 16.61& 	17.50& 	20.06& 	28.30& 	\textbf{30.57} & 29.00 & 30.47 & 31.04\\
Lena512    & 16.66& 	17.55& 	20.04& 	28.01& 	29.88 & 28.67 & \textbf{29.95} & 30.38\\
Man512     & 16.67& 	17.53& 	19.92& 	26.44&	27.39 & 26.99 & \textbf{27.77} & 28.11\\
Peppers256 & 16.67& 	17.53& 	19.93& 	26.79& 	27.87 & 27.29 & \textbf{28.09} & 28.38\\
\hline
\end{tabular}
\label{table:denoising results}
\end{table}

Figures~\ref{fig:denoising results1} and~\ref{fig:denoising results2} show some results using noise levels of $\sigma_\eta = 20/255$ and $40/255$, respectively.
Notice that none of these noise levels is covered by the trained DnCNNs.
Table~\ref{table:denoising results} shows resulting SNR values for the full set of experiments using 8 test images and 3 noise levels of $\sigma_\eta = 20/255, 30/255, 40/255$.  The results in the center of the table indicate the result of applying an individual CNN to the noisy image.  Because of the form of $F_{K+1}$ in \eqref{eq:FK1} and $\sigma = \sigma_\eta$, the result of this single application of the CNN is the same as the CE solution obtained by using only that single CNN together with $F_{K+1}$.  

Notice that in almost all cases the consensus equilibrium of the full group has the highest PSNR when compared to the individual application of the DnCNNs.
Also, the improvement in terms of the PSNR is quite substantial for noise levels $\sigma_\eta = 20/255$ and $\sigma_\eta = 30/255$. 
For $\sigma_\eta = 40/255$, CE still offers PSNR improvement except for House256, which is an image with many smooth regions.
In addition, visual inspection of the images shows that the CE result yields the best visual detail while also removing the noise most effectively.
While DnCNN denoisers can be very effective, they must be trained in advance using the correct noise level. 
This demonstrates that the consensus equilibrium can be used to generate a better result by blending together multiple pre-trained DnCNNs.

In order to illustrate that the CE solution outperforms a well-chosen linear combination of the outputs from each denoiser, we report a baseline combination result in Table~\ref{table:denoising results}. The baseline results are generated by
$$
\xhat_{\mathrm{baseline}} = \sum_{i=1}^n \mu_i \xhat_i,
$$
where $\{\xhat_i\}$ are the initial estimates provided by the denoisers, and $\mu_i$ is defined through \ref{eq:weight} without $p_{K+1}$.  That is, we use the same weights as those for CE, excluding the weight for the likelihood term and rescaled to sum to 1 after this exclusion. Therefore, $\xhat_{\mathrm{baseline}}$ can be considered as a linear combination of the initial estimates, with weights defined by the distance between the current noise level and the trained noise levels. The results in Table~\ref{table:denoising results} show that while $\xhat_{\mathrm{baseline}}$ very occasionally outperforms the best of the individual denoisers, it is usually worse than the best individual denoiser and is uniformly worse than CE. In the last column of Table~\ref{table:denoising results} we show the result of DnCNN trained at a noise level matched with the actual noise level. It is interesting to note that CE compares favorably with the matched DnCNN in many cases, except for large sigma where the matched DnCNN is uniformly better.

We note that \cite{YQWang2014} uses a linear transformation depending on the noise level of a noisy image in order to match the noise level of a trained neural network, and then applies the inverse linear transformation to the output.  This provides another approach to the example above but doesn't include the ability of CE to combine multiple sources of influence without a predetermined conversion from one to the other. We should also point out the recent work of Choi et al. \cite{Choi_Elgendy_Chan_2018} which demonstrates an optimal mechanism of combining image denoisers.

\section{Conclusion}
We presented a new framework for image reconstruction, which we term Consensus Equilibrium (CE).
The distinguishing feature of the CE solution is that it is defined by a balance among operators rather than the minimum of a cost function.
The CE solution is given by the consensus vector that arises simultaneously from the balance of multiple operators, which may include various kinds of image processing operations.
In the case of conventional regularized inversion, for which the optimization framework holds, the CE solution agrees with the usual MAP estimate, but CE also applies to a wide array of problems for which there is no corresponding optimization formulation.

We discussed several algorithms for solving the CE equations, including a novel anisotropic preconditioned Mann iteration and a Jacobian-free Newton Krylov method.
We also introduced a novel precondition method for accelerating the Mann iterations used to solve the CE equations.
There is a great deal of room to explore other methods for finding CE solutions as well as for formulating other equilibrium conditions.  

Our experimental results, on a variety of problems with varying complexity,
demonstrate that the Consensus Equilibrium approach can solve problems for which there is no corresponding regularized optimization and can in some cases achieve consensus results that are better than any of the individual operators.
In particular, we showed how the Consensus Equilibrium can be used to integrate a number of CNN denoisers, thereby achieving a better result than any individual denoiser.

\section*{Acknowledgments}
We thank the referees for many helpful remarks that have improved this paper substantially.

\begin{appendices}

\section{Appendix: Proofs} \label{AppProofs}

\begin{proof}[Proof of Theorem~\ref{thm1}] 
In order to use $\sigma^2 >0$ as in \eqref{eq:prox}, we multiply the objective function in \eqref{eq:split} by $\sigma^2$, which does not change the solution.  Define the Lagrangian associated with this scaled problem as 
$$ L(x, (x_i)_{i=1}^N, (\lambda_i)_{i=1}^N) = \sum_{i=1}^N(\sigma^2 \mu_i f_i(x_i) + (x-x_i)^T \lambda_i), $$
where the $\lambda_i \in \RR^n$ are the Lagrange multipliers for the equality constraints $x_i = x$.  Since the $f_i$ are all convex and lower-semicontinuous, the first order KKT conditions are necessary and sufficient for optimality \cite[Theorem 28.3]{Rockafellar1997}.  At a solution point $(x^*, (x_i^*)_{i=1}^N, (\lambda_i^*)_{i=1}^N)$, these conditions are given by 
\begin{align*}
\nabla_x L(x^*, (x_i^*)_{i=1}^N, (\lambda_i^*)_{i=1}^N) &= 0\\
\partial_{x_i} L(x^*, (x_i^*)_{i=1}^N, (\lambda_i^*)_{i=1}^N) &\ni 0, \forall i=1,\ldots, N\\
x_i^* - x^* &= 0, \forall i=1,\ldots, N,  
\end{align*}
where $\partial_{x_i}$ is the subdifferential with respect to $x_i$.  These convert to 
\begin{align}
\sum_{i=1}^N \lambda_i^* &= 0  \label{eq:lambdai}\\  
\sigma^2 \mu_i \partial f_i(x_i^*) &\ni \lambda_i^*, \ \forall i=1,\ldots, N \label{eq:partfi}\\
x_i^* &= x^*, \ \forall i=1,\ldots, N.  \label{eq:xix}
\end{align}
Define $u_i^* = \lambda_i^* / \mu_i $, in which case \eqref{eq:lambdai} is the same as \eqref{eq:CEu}.  Next, use $x_i^* = x^*$ from \eqref{eq:xix} in \eqref{eq:partfi} and cancel $\mu_i$ to get $\sigma^2 \partial f_i(x^*) \ni u_i^*$ for all $i$.  
Adding $x^*$ to both sides gives $ x^* + \sigma^2 \partial f_i(x^*) \ni x^* + u_i^*,$
or
$$ (I + \sigma^2 \partial f_i)(x^*) \ni x^* + u_i^*.$$
Since the $f_i$ are convex and $\sigma^2 > 0$, we can invert to get $x ^*= (I + \sigma^2 \partial f_i)^{-1}(x^* + u_i^*)$.   From \cite[Proposition~16.34]{Bauschke2011}, this is equivalent to \eqref{eq:CEF} in the case that $F_i$ is the proximal map of \eqref{eq:prox}.
\end{proof}

\begin{proof}[Proof of Theorem~\ref{thm:FilteredMann}]
Since $H$ is symmetric and positive definite, there is an orthogonal matrix $Q$ and a diagonal matrix $\Lambda$ with $\Lambda_{jj} = \lambda_j > 0$ for all $j$ and $H = Q \Lambda Q^T$.  Let $q_j$ be the $j$th column of $Q$, and let $\pi_j v = (q_j^T v) q_j$ be orthogonal projection onto the span of $q_j$.  Define the associated norm $\|v\|_j = \|\pi_j v\|$.   
 Also, let $\lambda$ be the product $\lambda_1 \cdots \lambda_N$, and let $\hat \lambda_j = \lambda/\lambda_j$ (i.e., the product of all $\lambda_1$ through $\lambda_N$ except $\lambda_j$).  Define the weighted norm
$$ \|v\|_{H^{-1}}^2 = v^T H^{-1} v = \sum_j \lambda^{-1}_j \|v\|_j^2, $$
which is equivalent to the standard norm on $\RR^N$.  

By assumption, there is a fixed point $v^* = Tv^*$.  Using $\pi_j H = \lambda_j \pi_j$ and applying $\pi_j$ to both sides of the definition of $v^{k+1}$ gives
\begin{align*}
\|  v^{k+1} - v^* \|_j^2  = \|(1 &- \lambda_j) \pi_j v^k +  \lambda_j \pi_j Tv^k \\ 
&- (1- \lambda_j) \pi_j v^* -  \lambda_j \pi_j T v^*\|^2.
\end{align*}
Here and below, we use $v^* = Tv^*$ freely as needed.  
As in \cite{boyd-primer}, we use the equality $\|(1-\theta) a + \theta b\|^2 = (1 - \theta) \|a\|^2 + \theta \|b\|^2 - \theta(1-\theta) \|a-b\|^2$, which holds for $\theta$ between 0 and 1 and can be verified by expanding both sides as a function of $\theta$.  In our case, we have $\theta =  \lambda_j \in (0,1)$ from the assumptions on $H$.  After conversion back to the norm $\| \cdot \|_j$, this yields
\begin{align*}
\|  v^{k+1} - v^* \|_j^2 = (1 &-  \lambda_j) \|v^k - v^*\|_j^2 +  \lambda_j \|T v^k - Tv^*\|_j^2 \\
&-  \lambda_j (1 -  \lambda_j) \|v^k - Tv^k\|_j^2.  
\end{align*}
Summing with weights $\lambda_j^{-1}$ gives
\begin{align*}
\sum_j \lambda_j^{-1} \|v^{k+1} - v^*\|^2_j = \sum_j (\lambda_j^{-1} &- 1) \|v^k - v^*\|_j^2 
+ \sum_j  \|T v^k - T v^*\|_j^2 \\
&-  \sum_j \lambda_j^{-1} \lambda_j (1 -  \lambda_j) \|v^k - Tv^k\|_j^2.
\end{align*}
Since $T$ is nonexpansive, the right hand side is bounded above by replacing $Tv^k-Tv^*$ with $v^k-v^*$ in the second sum.  This new sum then exactly cancels the term arising from $-1$ in the first sum.  Let $c$ be the minimum over $j$ of $\lambda_j (1 -  \lambda_j)$, and note that $c>0$ since $\lambda_j < 1$ for each $j$ by assumption.  Putting these together and re-expressing in the $H^{-1}$ norm gives
$$ \|v^{k+1} - v^*\|_{H^{-1}}^2 \leq \|v^k - v^*\|_{H^{-1}}^2 -  c \|v^k - T v^k\|_{H^{-1}}^2. $$
The remainder of the proof is nearly identical to that in \cite{boyd-primer}; we include it for completeness.  Iterating in the first term on the right hand side, we obtain
\begin{equation} \label{eqn:vk1}
 \|v^{k+1} - v^*\|_{H^{-1}}^2 \leq  \|v^1 - v^*\|_{H^{-1}}^2 -  c \sum_{i=1}^k \|v^j - T v^j\|_{H^{-1}}^2, 
 \end{equation}
and hence
$$ \sum_{i=1}^k \|v^j - T v^j\|_{H^{-1}}^2 \leq \frac{1}{ c} \|v^1 - v^*\|_{H^{-1}}^2.  $$
In particular, $\|v^j - Tv^j\|_{H^{-1}}$ and hence $\|v^j - Tv^j\|$ tend to 0 as $j$ tends to $\infty$.  This also implies that 
$$ \min_{j=1, \ldots, k} \|v^j - Tv^j\|_{H^{-1}}^2 \leq \frac{1}{ c k} \|v^1 - v^*\|_{H^{-1}}^2. $$

Finally, note that \eqref{eqn:vk1} implies that the sequence $\{v^k\}$ is bounded, hence has a limit point, say $\hat v$.  Since $I-T$ is continuous and $v^k - Tv^k$ converges to 0, we have $\hat v = T \hat v$.  Using $\hat v$ in place of $v^*$ in \eqref{eqn:vk1}, we see that $\|v^k - \hat v\|_{H^{-1}}$ decreases monotonically to 0, hence $v^k$ converges to $\hat v$.  
\end{proof}

\begin{proof}[Proof of Theorem~\ref{thm:spectrum}]

Let $\T_\rho$ denote the map $(1-\rho) \I + \rho \T$.  
Let $\mu_j(\rho) = (1-\rho) + \rho \lambda_j$ and note that $\T_\rho$ has eigenvalues $\mu_1, \ldots, \mu_n$.  Since the real part of $\lambda_j$ is less than 1, the line segment defined by $\mu_j(\rho)$ for $\rho$ in the interval $[0,1]$ has a nonempty intersection with the open unit disk in the complex plane.  For each $j$, there is some $\eps_j > 0$ so that this intersection contains the set  of points $\mu_j(\rho)$ for $\rho$ in $(0, \eps_j]$.  Taking $\eps_0$ to be the minimum of the $\eps_j$ and taking $\rho$ in the interval $(0, \eps_0]$, there exists $r<1$ for which $|\mu_j(\rho)| \leq r < 1$ for all $j$.

For this choice of $\rho$, let $A$ be the Jacobian of $\T_\rho$ at the fixed point, $\w^*$, which we may assume is the origin. The Schur triangulation gives a unitary matrix $Q$ and an upper triangular matrix $U$ with $U = Q^{-1} A Q$.  Write $U = \Lambda + U'$ with $\Lambda$ diagonal and $U'$ zero on the diagonal.  
Let $u_\text{max}$ be the maximum of $|U_{i,j}'|$ over all entries in $U'$.    For $\eps>0$, define $D$ to be the diagonal $n \times n$ matrix with $D_{i,i} = \eps^i$.  A computation shows that $D^{-1} U D$ has the same diagonal entries as $U$ but that each off-diagonal has the form $U_{i,j} \eps^{j-i}$ with $j>i$, hence is bounded by $\eps u_\text{max}$ in norm.  This plus the differentiability of $\T_\rho$ implies that for $x$ in a neighborhood of $0$,
\begin{align*}
\| D^{-1} Q^{-1} \T_\rho Q D x \| &= \| \Lambda x + D^{-1} U' D x\| + o(\|x\|) \\
& \leq (r + n \eps   u_\text{max}+ R(\|x\|))\|x\| ,  
\end{align*}
where $R(\|x\|)$ decreases to 0 as $\|x\|$ tends to 0.  Choosing $\eps$ and $\|x\|$ sufficiently small, we have $r + n \eps u_\text{max}+ R(\|x\|) < \beta$ for some $\beta < 1$. In this case we can iterate to obtain 
$$ \| D^{-1} Q^{-1} \T_\rho^k Q D x \| \leq \beta^k \|x\|.  $$
In other words, for $x^0$ in a neighborhood $N$ of the origin, the iterates $x^k = D^{-1} Q^{-1} \T_\rho^k Q D x^0$ converge geometrically to the origin.  Multiplying by $QD$ and labeling $\w^k = QDx^k$, we have $\w^k = \T_\rho^k \w^0$ converges geometrically to 0 for all $\w^0$ in the neighborhood $QDN$ of the origin.  
\end{proof}

\end{appendices}

{\small
\bibliographystyle{Formatting/siamplain}
\bibliography{References}
}

\end{document}